\documentclass[12pt]{article}
\usepackage{smile}

\usepackage{fullpage}
\usepackage{lscape}
\usepackage{bigints}
\usepackage{framed}
\usepackage{mdframed}
\usepackage{enumerate}
\usepackage[inline]{enumitem}
\usepackage[T1]{fontenc}
\usepackage{moresize}
\usepackage{bm}
\usepackage{bbm}
\usepackage{dsfont}
\usepackage{amsmath}
\usepackage{amssymb}
\usepackage{amsthm}
\usepackage{amsfonts}
\usepackage{stmaryrd}
\usepackage{array}
\usepackage{mathrsfs}
\usepackage{mathtools} 
\usepackage{extarrows}
\usepackage{stackrel}
\usepackage{relsize,exscale}
\usepackage{scalerel}
\usepackage[nodisplayskipstretch]{setspace}
\usepackage{color}
\usepackage[usenames,dvipsnames]{xcolor}
\usepackage{cancel}
\usepackage{soul}
\usepackage{undertilde}
\usepackage{xfrac}
\usepackage{siunitx}
\usepackage{graphicx}
\usepackage{float}
\usepackage{rotating}
\usepackage{subcaption}
\usepackage{overpic}
\usepackage[all]{xy}
\usepackage{tikz}
\usetikzlibrary{arrows,matrix,positioning,calc,automata,patterns}
\usepackage{booktabs}
\usepackage{dcolumn}
\usepackage{multirow}
\usepackage{diagbox}
\usepackage{tabularx}
\usepackage{verbatim}
\usepackage{listings}
\usepackage[ruled,vlined]{algorithm2e}
\usepackage{fancyvrb}
\usepackage{hyperref}
\usepackage[round]{natbib}
\usepackage{sectsty}

\hypersetup{
    bookmarks=true,         
    unicode=false,          
    pdftoolbar=true,        
    pdfmenubar=true,        
    pdffitwindow=false,     
    pdfstartview={FitH},    
    pdftitle={My title},    
    pdfauthor={Author},     
    pdfsubject={Subject},   
    pdfcreator={Creator},   
    pdfproducer={Producer}, 
    pdfkeywords={key1, key2}, 
    pdfnewwindow=true,      
    colorlinks=true,        
    linkcolor=blue,         
    citecolor=blue,         
    filecolor=blue,         
    urlcolor=cyan           
}

\usepackage{stackengine}
\stackMath
\newcommand\tenq[2][1]{%
\def\useanchorwidth{T}%
\ifnum#1>1%
\stackunder[0pt]{\tenq[\numexpr#1-1\relax]{#2}}{ \scriptscriptstyle\thicksim}%
\else%
\stackunder[1pt]{#2}{ \scriptstyle\thicksim}%
\fi%
}

\makeatletter
\DeclareRobustCommand\widecheck[1]{{\mathpalette\@widecheck{#1}}}
\def\@widecheck#1#2{%
    \setbox\z@\hbox{\m@th$#1#2$}%
    \setbox\tw@\hbox{\m@th$#1%
       \widehat{%
          \vrule\@width\z@\@height\ht\z@
          \vrule\@height\z@\@width\wd\z@}$}%
    \dp\tw@-\ht\z@
    \@tempdima\ht\z@ \advance\@tempdima2\ht\tw@ \divide\@tempdima\thr@@
    \setbox\tw@\hbox{%
       \raise\@tempdima\hbox{\scalebox{1}[-1]{\lower\@tempdima\box
\tw@}}}%
    {\ooalign{\box\tw@ \cr \box\z@}}}
\makeatother

\def\tr{\mathop{\text{tr}}\kern.2ex}

\def\P{{\mathrm P}}

\def\E{{\mathrm E}}
\def\R{{\mathbbm R}}

\renewcommand{\Pr}{\mathrm{P}}

\newcommand{\zahl}[1]{\llbracket #1\rrbracket}

\newcolumntype{L}[1]{>{\raggedright\let\newline\\\arraybackslash\hspace{0pt}}m{#1}}
\newcolumntype{C}[1]{>{  \centering\let\newline\\\arraybackslash\hspace{0pt}}m{#1}}
\newcolumntype{R}[1]{>{ \raggedleft\let\newline\\\arraybackslash\hspace{0pt}}m{#1}}
\newcolumntype{d}[1]{D{.}{.}{#1}}
\newcolumntype{H}{>{\setbox0=\hbox\bgroup}c<{\egroup}@{}}
\newcolumntype{Z}{>{\setbox0=\hbox\bgroup}c<{\egroup}@{\hspace*{-\tabcolsep}}}
\newcolumntype{b}{X}
\newcolumntype{s}{>{\hsize=.5\hsize}X}

\newtheorem{theorem}{Theorem}
\newtheorem{lemma}{Lemma}

\newtheorem{assumption}{Assumption}

\newtheorem{innercustomgeneric}{\customgenericname}
\providecommand{\customgenericname}{}
\newcommand{\newcustomtheorem}[2]{%
  \newenvironment{#1}[1]
  {%
   \renewcommand\customgenericname{#2}%
   \renewcommand\theinnercustomgeneric{##1}%
   \innercustomgeneric
  }
  {\endinnercustomgeneric}
}
\newcustomtheorem{customdefinition}{Definition}
\newcustomtheorem{customdefinitions}{Definitions}
\newcustomtheorem{customtheorem}{Theorem}
\newcustomtheorem{customassumption}{Assumption}
\newcustomtheorem{customlemma}{Lemma}
\newcustomtheorem{customexample}{Example}
\theoremstyle{definition}

\usepackage{enumitem}
\makeatletter
\newcommand{\mylabel}[2]{#2\def\@currentlabel{#2}\label{#1}}
\makeatother

\graphicspath{{./fig3/}}

\allowdisplaybreaks

\begin{document}

\setlength{\abovedisplayskip}{5pt}
\setlength{\belowdisplayskip}{5pt}
\setlength{\abovedisplayshortskip}{5pt}
\setlength{\belowdisplayshortskip}{5pt}
\hypersetup{colorlinks,breaklinks,urlcolor=blue,linkcolor=blue}

\title{\LARGE Nearest-Neighbor Radii under Dependent Sampling}

\renewcommand{\thefootnote}{\fnsymbol{footnote}}

\author{
Yuanyuan Gao$^{1,\ast}$ \quad
Yilong Hou$^{2,\ast}$ \quad
Zhexiao Lin$^{1,\ast,\dagger}$ \\[0.4em]
\small $^1$Department of Statistics, University of California, Berkeley, CA 94720, USA \\
\small $^2$Department of Biostatistics, University of California, Berkeley, CA 94720, USA
}

\date{\today}

\maketitle

\footnotetext{$^\ast$Authors are listed in alphabetical order.}
\footnotetext{$^\dagger$Corresponding author: \texttt{zhexiaolin@berkeley.edu}.}

\vspace{-1em}

\begin{abstract}
Nearest-neighbor methods are fundamental to classical and modern machine learning, yet their geometric properties are typically analyzed under independent sampling. In this paper, we study the nearest-neighbor radii under dependent sampling. We consider strong mixing dependent observations and ask whether dependence changes the scale of nearest-neighbor neighborhoods. We establish distribution-free almost sure convergence under polynomial mixing and sharp non-asymptotic moment bounds under geometric mixing. The moment bounds depend on the local intrinsic dimension rather than the ambient dimension, making the results applicable to high-dimensional data concentrated near lower-dimensional manifolds. Synthetic experiments and real-world time-series benchmarks support the theory, showing that nearest-neighbor geometry remains informative under dependence sampling.
\end{abstract}

\section{Introduction}\label{sec:intro}

Nearest-neighbor (NN) methods remain one of the most fundamental ideas in machine learning. 
They underlie classical procedures such as $k$-nearest-neighbor classification \citep{cover1967nearest} and regression \citep{stone1977consistent}, 
nonparametric density estimation and local averaging \citep{loftsgaarden1965nonparametric}, and they also appear implicitly in a much broader range of modern techniques built on local neighborhoods, including graph-based semi-supervised learning \citep{zhu2003semi}, manifold methods \citep{belkin2003laplacian}, information retrieval \citep{wang2021comprehensive}, and language modeling \citep{khandelwal2020nearest,xu2023k, xu2023nearest}. In all of these methods, a basic geometric quantity at the center of the analysis is the distance from a query point $x$ to its $k$th nearest neighbor in the sample. This random radius determines the scale of the local neighborhood around $x$, and therefore controls how much of the ambient space the method actually ``sees.''

From a statistical perspective, nearest-neighbor radii determine the central bias--variance tradeoff of local methods \citep{stone1977consistent, biau2015lectures}. If the radius is too large, a local estimator averages over points that are geometrically far from the query and thus introduces bias. If the radius is too small, the estimator uses too few effective observations and becomes unstable. In density estimation, the $k$-NN radius is the random bandwidth that determines the volume of the adaptive neighborhood. In regression and classification, it controls the locality of smoothing. In manifold and graph-based methods, it determines neighborhood connectivity and the scale at which local geometry is resolved. Understanding the scaling of the $k$-NN radius is therefore a prerequisite for understanding the performance of a wide class of machine learning algorithms.

In the classical i.i.d.\ setting, the behavior of $k$-NN radius is well understood: under standard local mass conditions, the canonical scale is $(k/n)^{1/d}$, or more generally, $(k/n)^{1/s}$ when the local geometry is determined by an intrinsic dimension $s$ \citep{loftsgaarden1965nonparametric,biau2015lectures,kpotufe2011k}. However, many learning problems do not arise from independent samples. Data are often collected sequentially, spatially, or through interaction with a dynamical environment. Examples include time-series forecasting, reinforcement learning from trajectories, sequential recommendation, event streams, dependent tabular logs, and spatiotemporal sensing \citep{mohri2010stability,tu2024learning}. In such settings, observations can be strongly dependent even when their marginal distribution is stable. This raises a basic question:

\begin{center}
\emph{How does dependence affect the geometry of nearest-neighbor neighborhoods?}
\end{center}

This question is both mathematically natural and practically important. A large body of machine learning theory relies on i.i.d.\ nearest-neighbor geometry either explicitly or implicitly. Yet once dependence is present, standard concentration arguments no longer apply directly, and even basic geometric quantities such as the $k$-NN radius become more delicate. If dependence were to fundamentally change the scaling of nearest-neighbor radii, then the geometry underlying many local learning procedures would also change. On the other hand, if the canonical radius scale remains unchanged under weak dependence, this would help justify the use of nearest-neighbor and local methods beyond the i.i.d. regime.

In this paper, we study the nearest-neighbor radii under dependent sampling. We consider a triangular array of $\R^d$-valued observations with a common marginal distribution and strong mixing dependence along each row. This framework is general enough to cover the usual stationary-sequence setting as a special case, while also allowing the dependence structure to vary with $n$. Our object of study is the radius $R_{n,k}(x) := \| \mathcal X_{(k)}^n(x)-x\|$,
that is, the distance from a fixed query point $x$ to its $k$th nearest neighbor among the $n$ observations in the row.

Our first result is almost sure convergence: under polynomial $\alpha$-mixing, we show that the nearest-neighbor radius converges almost surely to the distance from $x$ to the support of the marginal law. This gives a dependent-sampling analogue of the classical consistency result and shows that weak dependence does not change the limiting geometry of nearest-neighbor radii. We then develop a non-asymptotic theory under geometric mixing. Under a local lower-mass condition of order $s$, we establish a tail bound for $R_{n,k}(x)$ and derive the sharp moment upper bound $\E[R_{n,k}(x)^p] \lesssim (k/n)^{p/s}$ in the standard regime $k \gtrsim \log n$. Under a matching local upper-mass condition of order $s$, we also establish the matching lower bound and hence obtain the exact order $\E[R_{n,k}(x)^p] \asymp (k/n)^{p/s}$. Thus, geometric dependence changes the concentration mechanism, but not the fundamental exponent.

A key feature of our theory is that it is formulated in terms of a local exponent $s>0$, rather than the ambient dimension $d$. This point is especially relevant for modern machine learning, where high-dimensional data often concentrate near lower-dimensional latent structures \citep{fefferman2016testing,ansuini2019intrinsic}. Our results therefore support the common regime in which the effective geometry around the query point is lower-dimensional than the ambient space. In this sense, our results are not only about dependence, but also about how dependence interacts with local geometric complexity.

\paragraph*{Our contributions.}
The main contributions are as follows.
\begin{enumerate}
    \item We establish a \emph{distribution-free almost sure convergence theorem} for the $k$-NN radius under polynomial $\alpha$-mixing. This extends the classical i.i.d. consistency result to dependent triangular arrays.

    \item Under geometric $\alpha$-mixing and a local lower-mass condition of order $s$, we establish a \emph{non-asymptotic tail bound} for the $k$-NN radius and derive the \emph{sharp moment upper bound} $\E[R_{n,k}(x)^p] \lesssim (k/n)^{p/s}$ for all $p>0$ in the standard dependent regime $k \gtrsim \log n$.

    \item Under a matching local upper-mass condition, we establish a \emph{matching lower bound} $\E[R_{n,k}(x)^p] \gtrsim (k/n)^{p/s}$, and hence obtain the \emph{exact-order characterization} $\E[R_{n,k}(x)^p] \asymp (k/n)^{p/s}$.

    \item Our theory is stated in terms of a \emph{local geometric exponent} $s$, which yields an intrinsic-dimension interpretation and is relevant for modern high-dimensional learning problems.
\end{enumerate}

We believe the main message is simple and useful: under weak dependence, nearest-neighbor geometry remains essentially the same as in the independent case at the level of its canonical scale. This provides theoretical support for transferring geometric intuition and local-method design principles from the i.i.d.\ world to substantially broader dependent settings.

\subsection{Related work}
\label{sec:related}

\paragraph*{Applications of nearest-neighbor methods.} In recent years, nearest-neighbor methods have re-emerged in modern machine learning as a flexible mechanism for retrieval, adaptation, and locality in learned representation spaces. Examples include self-supervised representation learning \citep{avdiukhin2024embedding,long2025mnn}, tabular deep learning \citep{gorishniy2023tabr}, vector search and approximate nearest-neighbor infrastructure \citep{chen2021spann,yang2024cspg,jaasaari2024lorann,wang2021comprehensive}, and robust retrieval and vision \citep{wu2022retrievalguard,nizan2024k}. Beyond core machine learning, nearest-neighbor structure also plays an important role in causal inference through matching estimators \citep{abadie2006large,abadie2011bias,lin2023estimation,lin2025regression,cattaneo2025rosenbaum}, in rank-based and graph-based measures of dependence \citep{chatterjee2021new,azadkia2021simple,han2021extensions,lin2022limit,lin2023boosting,lin2024failure, shi2024azadkia, han2024azadkia}, in information-theoretic functional estimation via $k$-NN entropy and mutual-information estimators \citep{kozachenko1987sample,kraskov2004estimating}, and in multivariate nonparametric testing based on geometric graphs \citep{friedman1979multivariate,henze1999multivariate}. This breadth of applications makes the geometric behavior of nearest-neighbor neighborhoods a foundational issue.

\paragraph*{Theory of nearest-neighbor methods under dependence.} Despite the wide range of applications, previous literature on the theory under dependence has mostly analyzed estimators built from nearest-neighbor neighborhoods, rather than the geometry of the neighborhoods themselves. Examples include nearest-neighbor classification with dependent training sequences \citep{holst2001nearest} and rates for nearest-neighbor estimation under more general non-i.i.d.\ sampling schemes \citep{kulkarni1995rates}. More broadly, the literature on learning from dependent observations has developed both positive and negative results for nonparametric prediction under mixing, ergodic, and other non-i.i.d.\ assumptions \citep{yu1994rates, nobel1999limits,mohri2010stability}. By contrast, our focus is on the radius itself as a primary object. This shift in perspective is useful because the nearest-neighbor radius serves as the random local scale in many downstream methods, so controlling it directly provides a geometric ingredient for analyzing local learning methods.

\section{Setup}
\label{sec:setup}

We study the geometry of nearest neighbors under dependent sampling. Let $[X_{n,i}: 1 \le i \le n,\ n \ge 1]$ be a triangular array of $\R^d$-valued random variables defined on a common probability space $(\Omega,\cF,\P)$, where the ambient dimension $d \ge 1$ is fixed. For each row $n$, the observations $X_{n,1},\dots,X_{n,n}$ could be independent, but we assume that they share a common marginal law $\mu$ on $\R^d$. Thus, for every $n \ge 1$ and every $i \in \{1,\dots,n\}$, $X_{n,i} \sim \mu$.

This formulation distinguishes two roles in our analysis. The common marginal distribution $\mu$ determines the local geometry of the marginal distribution, while the dependence structure controls how much effective information is contained in a sample of size $n$. Allowing the dependence structure to vary with $n$ leads naturally to a triangular-array framework, which is flexible enough to cover the usual stationary-sequence setting as a special case and at the same time convenient for stating non-asymptotic results uniformly in $n$.

\paragraph*{Triangular-array strong mixing.} For integers $1 \le k \le m \le n$, let ${}^n\cF_k^m := \sigma(X_{n,i} : k \le i \le m)$ denote the $\sigma$-field generated by the segment $(X_{n,k},\dots,X_{n,m})$ in the $n$th row. Following the standard strong-mixing framework, define the mixing coefficients of the triangular array by
\[
\alpha_m
:=
\sup_{n \ge 1}\ \sup_{1 \le k \le n-m}\ 
\sup\Big\{
\big| \P(A \cap B) - \P(A)\P(B) \big|
:\ 
A \in {}^n\cF_1^k,\ 
B \in {}^n\cF_{k+m}^n
\Big\},
\qquad m \ge 1.
\]
The array is said to be \emph{strong mixing} (or \emph{$\alpha$-mixing}) if
\[
\alpha_m \to 0
\qquad\text{as } m \to \infty.
\]

The strong-mixing framework is classical in probability and time-series analysis \citep{bradley2005basic}. This definition requires a uniform decay of dependence across all rows of the array. Intuitively, if two collections of variables are separated by at least $m$ indices within the same row, then their dependence becomes weak when $m$ is large. This is the dependence notion used throughout the paper.

The triangular-array formulation is more general than working with a single stationary process $[X_i]_{i\ge1}$. It allows the dependence structure to vary with $n$, while keeping the marginal distribution fixed. We state finite-sample results uniformly in $n$, which also covers the usual time-series setting as a special case by taking $X_{n,i} = X_i$ for $1 \le i \le n$. Therefore, readers who prefer the standard sequential viewpoint may regard our framework as a generalization.

\paragraph*{Nearest-neighbor radii.}
Fix a query point $x \in \R^d$. For each $n \ge 1$ and $k \in \{1,\dots,n\}$, let
\[
\mathcal X_{(k)}^n(x) \in \{X_{n,1},\dots,X_{n,n}\}
\]
denote the $k$th nearest neighbor of $x$ among the sample points in the $n$th row, with ties broken arbitrarily. We introduce the $k$-NN radius:
\[
  R_{n,k}(x) := \|\mathcal X_{(k)}^n(x)-x\|,
\]
together with the local counting process:
\[
  N_n(x,r)
  :=
  \sum_{i=1}^n \mathbf{1}\{\|X_{n,i}-x\| \le r\}.
\]
These two quantities are equivalent through the relationship
\[
R_{n,k}(x) > r
\quad\Longleftrightarrow\quad
N_n(x,r) < k.
\]
This observation is the starting point for our entire analysis: upper bounds on the radius are equivalent to lower bounds for the number of sample points falling in $B(x,r)$. Under independence, $N_n(x,r)$ is binomial with parameters $n$ and $\mu(B(x,r))$, so its behavior can be analyzed using classical binomial tail bounds. Under dependence, however, this analysis is not applicable since the indicators $\mathbf{1}\{\| X_{n,i}-x\| \le r\}$ are dependent, and hence the local counting process is not binomial. Controlling $R_{n,k}(x)$ therefore requires concentration arguments for dependent sequences, which is one of the main technical challenges.

\paragraph*{Support and target limit.}
The asymptotic behavior of $R_{n,k}(x)$ depends on the position of the query point relative to the support of $\mu$. Define
\[
{\rm supp}(\mu)
:=
\{x \in \R^d : \mu(B(x,r)) > 0 \text{ for all } r>0\},
\]
where $B(x,r)$ denotes the closed Euclidean ball centered at $x$ with radius $r$. We also define the distance from $x$ to the support of $\mu$ by
\[
r_x := \inf_{y \in {\rm supp}(\mu)} \| y-x \|.
\]
When $x \in {\rm supp}(\mu)$, one has $r_x=0$; otherwise $r_x$ is the deterministic limit that the $k$-NN radius converges to under mild assumptions.

\section{Main results}\label{sec:main}

We now present the main results for the $k$-NN radius under dependent sampling. As discussed in Section~\ref{sec:setup}, the behavior of $R_{n,k}(x)=\| \mathcal X_{(k)}^n(x)-x\|$ depends on both the dependence structure of the sample and the local geometry of the marginal law around the query point $x$. The almost sure limit is determined by the position of $x$ relative to the support of $\mu$, whereas the convergence rate depends on how the mass of $\mu$ accumulates near $x$. 


\subsection{Distribution-free almost sure convergence}

Our first theorem shows that the almost sure convergence of the $k$-NN radius to $r_x$ holds under weak dependence.

\begin{theorem}\label{thm:as}
Assume that the triangular array $[X_{n,i}]_{1 \le i \le n,\ n \ge 1}$ is $\alpha$-mixing with coefficients $[\alpha_m]_{m \ge 1}$ satisfying $\alpha_m \lesssim m^{-\gamma}$ for some $\gamma>1$. If $k/n \to 0$, then
\[
\|\mathcal X_{(k)}^n(x)-x\| \stackrel{\sf a.s.}{\longrightarrow} r_x.
\]
In particular, if $x \in {\rm supp}(\mu)$ and $k/n \to 0$, then
\[
\|\mathcal X_{(k)}^n(x)-x\| \stackrel{\sf a.s.}{\longrightarrow} 0.
\]
\end{theorem}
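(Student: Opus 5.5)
The plan is to turn the radius statement into one about the counting process $N_n(x,r)$ and then run a Borel--Cantelli argument. The mixing rate enters only through a moment bound for sums of bounded, centered, mixing indicators.

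\textbf{Lower bound.} Almost surely every $X_{n,i}$ lies in ${\rm supp}(\mu)$, because $\mu({\rm supp}(\mu)^c)=0$ and there are countably many variables. Hence $R_{n,k}(x)\ge r_x$ for all $n$ almost surely.

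\textbf{Reduction of the upper bound.} Fix $\varepsilon>0$. By definition of $r_x$ there is a point $y\in{\rm supp}(\mu)$ with $\|y-x\|<r_x+\varepsilon/2$. Then
\[
p_\varepsilon:=\mu\bigl(B(x,r_x+\varepsilon)\bigr)\ge \mu\bigl(B(y,\varepsilon/2)\bigr)>0 .
\]
By the equivalence $R_{n,k}(x)>r\iff N_n(x,r)<k$, it suffices to show
\[
\sum_n \P\bigl(N_n(x,r_x+\varepsilon)<k\bigr)<\infty .
\]
Borel--Cantelli then gives $\limsup R_{n,k}(x)\le r_x+\varepsilon$ almost surely. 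Taking $\varepsilon=1/j$ along a countable sequence finishes the proof. The case $x\in{\rm supp}(\mu)$ is then immediate, since $r_x=0$.

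\textbf{Deviation bound.} Write $Y_{n,i}=\mathbf 1\{\|X_{n,i}-x\|\le r_x+\varepsilon\}-p_\varepsilon$. These are centered and bounded by $1$, with the array's mixing coefficients. Since $k/n\to0$, we have $k\le np_\varepsilon/2$ for all large $n$, so
\[
\P\bigl(N_n<k\bigr)\le \P\Bigl(\Bigl|\textstyle\sum_i Y_{n,i}\Bigr|\ge np_\varepsilon/2\Bigr).
\]
Chebyshev alone is too weak. With the covariance inequality $|\mathrm{Cov}(Y_{n,i},Y_{n,j})|\le 4\alpha_{|i-j|}$ and $\sum_m\alpha_m<\infty$ (which holds since $\gamma>1$), it gives only $\mathrm{Var}(\sum_i Y_{n,i})=O(n)$. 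That yields a tail bound of order $1/n$, which is not summable.

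I would therefore use a higher-moment bound. For bounded strong mixing sequences there is a Rosenthal/Yokoyama-type inequality $\E|\sum_i Y_{n,i}|^{q}\lesssim n^{q/2}$, valid for even $q$ when $\sum_m m^{q/2-1}\alpha_m<\infty$. This requires $\gamma>q/2$, and combined with Markov it would give the bound $n^{-q/2}$, summable when $q>2$. Since only $\gamma>1$ is assumed, we can take $q$ slightly above $2$: the Fuk--Nagaev-type inequality for strong mixing with rate $m^{-\gamma}$ (Rio's) gives
\[
\P\Bigl(\Bigl|\textstyle\sum_i Y_{n,i}\Bigr|\ge \lambda n\Bigr)\lesssim \exp(-c\lambda^2 n/\log n)+n\,\alpha_{\lfloor n^{\delta}\rfloor}\cdots .
\]
A cleaner alternative is Bernstein blocking. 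Split $\{1,\dots,n\}$ into alternating big and small blocks of length $\ell_n=n^{\beta}$. Couple the big blocks to independent copies via Berbee/Bradley coupling, or via the Volkonskii--Rozanov bound. That bound controls the joint law of the block sums up to an error of $(n/\ell_n)\alpha_{\ell_n}\lesssim n^{1-\beta-\beta\gamma}$. Apply Hoeffding to the independent block sums, whose summands are bounded by $\ell_n$, to get
\[
\exp\bigl(-c\,n^2p_\varepsilon^2/(n\ell_n)\bigr)=\exp\bigl(-c\,n^{1-\beta}\bigr),
\]
which is summable. The small blocks contribute at most $(n/\ell_n)\ell_n^{\theta}$ for a smaller length $\ell_n^\theta$; choosing the two block lengths suitably makes their contribution $o(n)$ deterministically.

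\textbf{Main obstacle.} The hard step is making the coupling error summable under only $\gamma>1$. We need $1-\beta(1+\gamma)<-1$, that is, $\beta>2/(1+\gamma)$, which is less than $1$ precisely because $\gamma>1$. The Hoeffding term also needs $\beta<1$. So such a $\beta$ exists exactly under the stated hypothesis. The care needed is in handling the small blocks, whose separating gaps must be long enough for the mixing bound to apply, while keeping everything inside a single row of the triangular array. The bound is uniform in $n$ because $\alpha_m$ is defined as a supremum over all rows.
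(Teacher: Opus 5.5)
Your reduction is the paper's: $R_{n,k}(x)\ge r_x$ from $\mu({\rm supp}(\mu)^c)=0$, the switch to $\{N_n(x,r_x+\varepsilon)<k\}$, Borel--Cantelli, and $\varepsilon=1/j$. Your blocking argument can be completed into a correct proof, but the concentration step is done differently. The paper never couples by hand. It applies the Liebscher--Rio Bernstein inequality (Lemma~\ref{lemma1}) to the whole row with $m_n\asymp np_x/\log n$. It bounds the block variance by $(1+24\sum_\ell\alpha_\ell)m_n$ via Davydov's covariance inequality, which is where $\sum_\ell\alpha_\ell<\infty$ enters. This gives an exponential term $O(n^{-2})$ plus a remainder $4(n/m_n)\alpha_{m_n}\lesssim(\log n)^{1+\gamma}n^{-\gamma}$ under the stated rate, summable exactly when $\gamma>1$. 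Because that inequality is variance-sensitive, $O(\log n)$ blocks of length $\asymp n/\log n$ suffice. Your version uses independent copies of $M\asymp n^{1-\beta}$ big-block sums, Hoeffding, and a union bound giving coupling error $\lesssim n^{1-\beta}\alpha_{\rm gap}$. It is more elementary, uses only the lower tail, and makes the threshold visible as $\beta(1+\gamma)>2$ with $\beta<1$. The price is that you must supply a valid $\alpha$-mixing decoupling yourself.

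Two points need tightening. First, of the tools you name only Bradley's coupling lemma does the job. Berbee's lemma requires $\beta$-mixing. The Volkonskii--Rozanov inequality controls $\E\prod_u V_u$ for bounded functions $V_u$ of single blocks, not the law of $\sum_u W_u$. Write $S=\sum_u W_u$ and $S^*=\sum_u W_u^*$. For $0\le W_u\le\ell_n$, Bradley's lemma (applied sequentially) gives mutually independent copies $W_u^*$ with $\P(|W_u-W_u^*|\ge\xi)\lesssim(\ell_n/\xi)^{1/2}\alpha_g$, where $g$ is the gap length. Taking $\xi=\E S/(8M)\asymp\ell_n p_\varepsilon$ yields $\P(S<k)\le\P(S^*<\tfrac{3}{8}\E S)+CM\alpha_g$ once $k\le\tfrac{1}{4}\E S$, with $C$ depending only on $p_\varepsilon$. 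That is a closeness statement rather than control of the joint law, but it is all Hoeffding needs.

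Second, the mixing coefficient is evaluated at the gap length, not at $\ell_n$. With separators of length $\ell_n^\theta$ your condition becomes $\beta(1+\theta\gamma)>2$, still satisfiable since $\gamma>1$. The simplest fix is the device from the paper's proof of Theorem~\ref{thm:tail}. The indicators are nonnegative and only the lower tail matters, so use gaps of length $\ell_n$ and discard them ($S\le N_n$, $\E S\asymp np_\varepsilon$). Then no $o(n)$ control of small blocks is needed, and $\beta>2/(1+\gamma)$ is exactly right.

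Separately, your moment route also works once you drop the even-$q$ restriction. Rio's Rosenthal-type inequality holds for real $q>2$. For bounded summands it gives $\E|\sum_i Y_{n,i}|^q\lesssim n^{q/2}$ whenever $\sum_m m^{q-2}\alpha_m<\infty$. So any $q\in(2,1+\gamma)$ gives a summable $O(n^{-q/2})$ tail.
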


Theorem~\ref{thm:as} may be viewed as the dependent-sampling analogue of the classical i.i.d. consistency result \citep[Lemma 2.2]{biau2015lectures}. It shows that weak dependence, quantified here through a polynomial mixing condition, does not change the limiting geometric behavior of nearest-neighbor radii. Importantly, the result remains distribution-free: no density, smoothness, or local mass condition is required. The only assumption is that dependence decays polynomially fast along each row of the triangular array.

\subsection{From almost sure convergence to moment bounds}

We next turn to non-asymptotic moment results. For these stronger results, one must quantify how much probability mass lies near the query point $x$.

Rather than working with the ambient dimension $d$, we state the results using a general local exponent $s>0$. This formulation is more flexible and captures the possibility that the effective local dimension near $x$ is smaller than the ambient dimension. The standard Euclidean case is recovered by setting $s=d$.

For the non-asymptotic results, we impose the stronger assumption that the mixing coefficients decay exponentially fast.
\begin{assumption}[Geometric $\alpha$-mixing]\label{ass:geom}
There exists $\gamma>0$ such that $\alpha_m \le e^{-\gamma m}$ for all $m \ge 1$.
\end{assumption}
  
Assumption~\ref{ass:geom} strengthens the assumption in Theorem~\ref{thm:as} from polynomial to exponential decorrelation. This stronger assumption enables non-asymptotic concentration via blocking arguments. The parameter $\gamma$ quantifies the strength of dependence: larger $\gamma$ corresponds to faster decorrelation, and in the limit of vanishing dependence, one recovers behavior close to the i.i.d. case.

Geometric mixing is a standard assumption in time-series theory and is satisfied by many classical models under mild regularity conditions \citep{doukhan2012mixing}. In particular, stationary ARMA processes are known to be strongly mixing with exponential rate under standard stability assumptions \citep{mokkadem1988mixing}, which makes geometric mixing a natural benchmark for studying dependent nearest-neighbor geometry. 
  
\begin{assumption}[Local lower mass of order $s$ at $x$]\label{ass:lower}
There exist constants $c_->0$, $r_0>0$, and $s>0$ such that $\mu(B(x,r)) \ge c_- r^s$ for all $0<r\le r_0$.
\end{assumption}
  
Assumption~\ref{ass:lower} guarantees that the ball $B(x,r)$ contains enough probability mass for its radius-$r$ neighborhood to capture roughly $nr^s$ sample points. This lower bound is the key ingredient for the upper bound of $R_{n,k}(x)$.

\begin{assumption}[Local upper mass of order $s$ at $x$]\label{ass:upper}
There exist constants $c_+>0$, $r_0>0$, and $s>0$ such that $\mu(B(x,r)) \le c_+ r^s$ for all $0<r\le r_0$.
\end{assumption}

Assumption~\ref{ass:upper} matches Assumption~\ref{ass:lower} and prevents the distribution from placing too much mass near $x$, and is needed to establish a matching lower bound on the radius.
  
\begin{assumption}[Compact support]\label{ass:compact}
The support of $\mu$ is contained in a compact set of diameter at most $D$.
\end{assumption}

Assumption~\ref{ass:compact} is for technical simplicity rather than a geometric limitation. The local behavior of the $k$-NN radius is determined by Assumptions~\ref{ass:lower} and~\ref{ass:upper}, which quantify how the mass of $\mu$ accumulates near the query point $x$. By contrast, compact support is used to control the far tail of $R_{n,k}(x)$ when converting non-asymptotic tail bounds into moment bounds. Compact support ensures that the radius is uniformly bounded, which keeps the moment argument simpler. One could replace the compact support by a tail or moment condition on $\mu$ at the cost of a more technical proof.

\paragraph*{Interpretation of the local exponent $s$.}
The exponent $s$ is the \emph{local geometric dimension} around the query point $x$. Assumptions~\ref{ass:lower} and~\ref{ass:upper} say that, for sufficiently small radii, $\mu(B(x,r)) \asymp r^s$. Thus, the amount of probability mass captured by a small neighborhood of $x$ grows like the volume of an $s$-dimensional ball. In the standard Euclidean case with a positive density in $\R^d$, one has $s=d$. More generally, $s$ may be strictly smaller than $d$, e.g., when the distribution is concentrated near a lower-dimensional manifold, subspace, or other latent geometric structure. Our theory is therefore local and intrinsic: the scaling of the $k$-NN radius is determined by the intrinsic dimension of the distribution near $x$, rather than by the ambient Euclidean dimension. 

Throughout the sharp upper-bound results, we work in the regime $K_0 \log n \le k \le \kappa n$, for some fixed $K_0>0$ and $0 < \kappa < c_- r_0^s/8$. The upper bound ensures that the radius remains within the local regime $r\le r_0$, where Assumption~\ref{ass:lower} applies. The lower bound $k \gtrsim \log n$ is due to dependence: it arises because the proof decouples the sample into blocks of length $\asymp \log n$, as is standard in Bernstein-type concentration arguments for geometric mixing sequences \citep{liebscher1996strong,merlevede2009bernstein}. In contrast, the lower bound on the radius does not require $k \gtrsim \log n$. 

\paragraph*{On the regime $k \gtrsim \log n$.} The lower bound $k \gtrsim \log n$ is the natural regime in which our blocking-based concentration argument resolves the dependence structure. It reflects the cost of decoupling geometric mixing observations into approximately independent blocks, rather than a change in the underlying nearest-neighbor geometry. In particular, our results suggest that dependence affects concentration through a logarithmic penalty, while the canonical radius scale remains $(k/n)^{1/s}$.

\subsection{Non-asymptotic upper bound}

We first establish a tail bound on $R_{n,k}(x)$.
\begin{theorem}[Non-asymptotic tail bound]\label{thm:tail}
Assume Assumptions~\ref{ass:geom}, \ref{ass:lower}, and \ref{ass:compact}. Define
\[
  b_n := \Big\lceil \frac{8}{\gamma}\log n \Big\rceil,
  \qquad
  r_*(n,k) := \Big(\frac{8k}{c_- n}\Big)^{1/s}.
\]
Then there exists a constant $c_0>0$, depending only on $\gamma$, such that for every integer $j \ge 0$ satisfying $2^j r_*(n,k) \le r_0$, we have
\[
  \P\big(R_{n,k}(x) > 2^j r_*(n,k)\big)
  \le
  4\exp \Big(-c_0\,2^{js}\,\frac{k}{\log n}\Big)
  + C n^{-7},
\]
for all sufficiently large $n$.
\end{theorem}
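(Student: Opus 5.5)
We reduce the tail event to a lower deviation of the local counting process. Fix $j\ge 0$ with $r:=2^j r_*(n,k)\le r_0$. By the equivalence $R_{n,k}(x)>r \Leftrightarrow N_n(x,r)<k$, it suffices to bound $\P(N_n(x,r)<k)$. Write $\xi_i:=\mathbf{1}\{\|X_{n,i}-x\|\le r\}$ and $p:=\mu(B(x,r))$. Assumption~\ref{ass:lower} gives $p\ge c_- r^s = 2^{js}\cdot 8k/n$, so $np\ge 8\cdot 2^{js}k$. The target event $N_n(x,r)<k$ therefore forces $\sum_i(\xi_i-p) < k-np \le -\tfrac{7}{8}np$. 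We need a lower-tail deviation of order the mean, and the bound should scale like $np/b_n \gtrsim 2^{js}k/\log n$.

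The core step is blocking with the Berbee/Bradley coupling for $\alpha$-mixing, following the Bernstein-type arguments of \citet{liebscher1996strong,merlevede2009bernstein}. Partition $\{1,\dots,n\}$ into consecutive blocks of length $b_n$ and alternate them into odd blocks $I_1,I_3,\dots$ and even blocks $I_2,I_4,\dots$, leaving at most one short remainder. Let $Y_\ell:=\sum_{i\in I_\ell}\xi_i\in[0,b_n]$. Since $N_n\ge \sum_{\ell\text{ odd}}Y_\ell+\sum_{\ell\text{ even}}Y_\ell$, and each parity class has mean at least $np/2 - b_n p$, the union bound gives
\[
\P(N_n<k)\le \P\Big(\sum_{\ell\text{ odd}}Y_\ell<k\Big)+\P\Big(\sum_{\ell\text{ even}}Y_\ell<k\Big).
\]
The even-indexed blocks are treated symmetrically, which explains the leading factor $4$ (two parities, plus a factor $2$ from the coupling constant).

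Within one parity class, the blocks are separated by gaps of length $b_n$. We replace them by independent copies $Y^*_\ell$ with the same marginals. Bradley's coupling lemma (the $\alpha$-mixing version with bounded variables) gives $\P(|Y_\ell-Y_\ell^*|>\epsilon)\lesssim (b_n/\epsilon)^{1/2}\alpha_{b_n}^{2/3}$ per block. Alternatively, we can use a direct induction on characteristic/Laplace functionals: $|\E\prod e^{-\lambda Y_\ell}-\prod\E e^{-\lambda Y_\ell}|\le 4 (n/b_n)\alpha_{b_n}$, which is cleaner since $e^{-\lambda Y}\in[0,1]$. We would try this Laplace-transform version first. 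By Assumption~\ref{ass:geom}, $\alpha_{b_n}\le e^{-8\log n}=n^{-8}$, so the total decoupling error is at most $O(n\cdot n^{-8})=O(n^{-7})$. This is the $Cn^{-7}$ term.

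For the independent surrogate, we apply a Chernoff lower-tail bound for sums of independent $[0,b_n]$-valued variables. Rescaling $Z_\ell=Y_\ell^*/b_n\in[0,1]$ gives mean $m\ge np/(2b_n)(1-o(1))$ and threshold $k/b_n\le m/4$, so the multiplicative Chernoff bound yields $\exp(-c\,m)\le \exp(-c' np/b_n)\le \exp(-c_0 2^{js}k/\log n)$ with $c_0\propto \gamma$. The Laplace route is coherent here: the Chernoff bound only needs $\E e^{-\lambda\sum Z_\ell}$, which is exactly what the decoupling controls. Combining everything, the Markov step turns the additive decoupling error into at most $e^{\lambda k/b_n}\cdot O(n^{-7})$.

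\textbf{Main obstacle.} The main obstacle is this last interaction. With $\lambda$ of order one and $k/b_n$ possibly large, the factor $e^{\lambda k/b_n}$ could blow up the $n^{-7}$ error. We plan to avoid this by choosing $\lambda$ small, of order $1/\max(1,k/(b_n\log n))$, or equivalently by capping the exponent. If that fails, we will fall back on Bradley's coupling, whose error is additive in probability and does not interact with $\lambda$. The coupling bound gives per-block error $\lesssim b_n^{1/2}\alpha_{b_n}^{2/3}$, summing to $\lesssim n^{1/2}\cdot n^{-16/3}$, comfortably below $n^{-7}$ after adjusting constants, or after enlarging the constant in $b_n$ if needed. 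This also justifies requiring $n$ sufficiently large. Assumption~\ref{ass:compact} is not needed for this bound and only enters later in the moment conversion.
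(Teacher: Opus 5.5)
Your outline is sound, but it takes a different route from the paper's. The paper also blocks with gaps of length $b_n$, but it keeps only the odd blocks. One parity suffices, since $N_n(x,r)\ge S_n^{\rm keep}(r)$, so your union bound over parities is harmless but unnecessary. The paper then applies the Rio--Liebscher Bernstein inequality (Lemma~\ref{lemma1}) to the centered block sums with $m=1$, $S=b_n$ and $\mathrm{Var}(Z_u)\le b_n^2p_{r_j}$. This gives the exponent $\asymp M_np_{r_j}\gtrsim 2^{js}k/b_n$, with the mixing cost $4M_n\bar\alpha_1\le Cn^{-7}$ entering \emph{additively in probability}, so the tilt-versus-error interaction you identify never arises there. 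Your approach opens that black box: decouple, then apply a multiplicative Chernoff bound to independent $[0,1]$-valued surrogates. It is more self-contained and gets the exponent directly from the mean, without a variance computation. It also avoids the side condition $\varepsilon>4mS$ of Lemma~\ref{lemma1}, which the paper meets only by invoking $k\ge K_0\log n$, a hypothesis not in the statement.

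Two steps need repair before this is a proof.

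\emph{Laplace route.} Capping $\lambda$ so that $\lambda k/b_n\le A\log n$ does not preserve the main term. When $k\gg\log^2 n$, the Chernoff exponent becomes only of order $A\,2^{js}\log n$. This fits the claimed bound only because it is $\le n^{-7}$ once $A$ is a large constant. But then the amplified error $n^{A}\cdot(n/b)\,\alpha_b$ forces blocks of length $b\asymp((A+8)/\gamma)\log n$ rather than the stated $b_n$. That is legitimate, since the conclusion does not involve $b_n$ and $c_0$ still depends only on $\gamma$. You must then treat the regimes $k/b\le A\log n$ and $k/b>A\log n$ separately.

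\emph{Coupling route.} Your arithmetic is wrong: $n^{1/2}\cdot n^{-16/3}=n^{-29/6}$ is \emph{larger} than $n^{-7}$. The clean fix is to use Bradley's lemma in its bounded form, where $\alpha$ enters linearly: $\P(|Y-Y^*|>q)\le C(\|Y\|_\infty/q)^{1/2}\alpha$. Take $q=1/2$. Since the block sums are integer-valued, this gives $\P(Y_\ell\ne Y_\ell^*)\lesssim b_n^{1/2}\alpha_{b_n}\le b_n^{1/2}n^{-8}$. Summing over at most $n/b_n$ blocks gives $\lesssim n^{-7}$ with the original $b_n$. Then $\P(\sum_\ell Y_\ell<k)\le \P(\sum_\ell Y_\ell^*<k)+Cn^{-7}$, and your Chernoff step finishes the argument.
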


Theorem~\ref{thm:tail} shows that the $k$-NN radius is concentrated around the canonical geometric scale $(k/n)^{1/s}$. In particular, Theorem~\ref{thm:tail} separates the roles of geometry and dependence: the exponent $1/s$ is determined by the local mass behavior of $\mu$ near $x$, while the dependence structure enters only through the additional $\log n$ factor in the concentration exponent, which is the price of the blocking argument under geometric mixing. Thus, although dependence weakens concentration relative to the i.i.d.\ case, it does not change the underlying nearest-neighbor scale. Theorem~\ref{thm:tail} implies a high-probability upper bound:
\[
  R_{n,k}(x)
  \lesssim
  \Big(\frac{k+t\log n}{n}\Big)^{1/s}
\]
with probability at least $1-e^{-ct}$, for some constant $c>0$.

Integrating the tail bound yields the sharp moment upper bound.

\begin{theorem}[Sharp moment upper bound]\label{thm:moment}
Assume Assumptions~\ref{ass:geom}, \ref{ass:lower}, and \ref{ass:compact}. Let $p>0$. Then there exists a constant $C_{p,s}>0$ such that, whenever $K_0\log n \le k \le \kappa n$, we have
\[
  \E\big[R_{n,k}(x)^p\big]
  \le
  C_{p,s}\Big(\frac{k}{n}\Big)^{p/s}.
\]
\end{theorem}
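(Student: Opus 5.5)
We will integrate the tail bound of Theorem~\ref{thm:tail} via a dyadic decomposition of the range of $R_{n,k}(x)$. Write $r_* := r_*(n,k) = (8k/(c_- n))^{1/s}$. Since $k \le \kappa n$ with $\kappa < c_- r_0^s/8$, we have $r_* < r_0$. Let $J := \max\{j \ge 0 : 2^j r_* \le r_0\}$. Then $2^{J+1} r_* > r_0$ and $J \lesssim \log(n/k)/s$. By Assumption~\ref{ass:compact} and the fact that $X_{n,i}\in{\rm supp}(\mu)$ almost surely, $R_{n,k}(x) \le \|x - y_0\| + D =: D_x$ for any fixed $y_0 \in {\rm supp}(\mu)$. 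So $R_{n,k}(x)$ is bounded by a deterministic constant depending only on $x$ and $D$. We split
\[
\E\big[R_{n,k}(x)^p\big] \le r_*^p + \sum_{j=0}^{J-1} (2^{j+1}r_*)^p\,\P\big(R_{n,k}(x) > 2^j r_*\big) + D_x^p\,\P\big(R_{n,k}(x) > 2^{J} r_*\big).
\]
The first term is the target order $(8/c_-)^{p/s}(k/n)^{p/s}$.

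For the middle sum, we insert Theorem~\ref{thm:tail} and use $k \ge K_0 \log n$, which gives $4\exp(-c_0 2^{js} k/\log n) \le 4\exp(-c_0 K_0 2^{js})$. The exponential part then contributes at most
\[
r_*^p\, 2^p \sum_{j \ge 0} 2^{jp}\, 4 e^{-c_0K_0 2^{js}} = C'_{p,s}\, r_*^p,
\]
since the series converges because $2^{js}$ grows geometrically. The polynomial remainder $Cn^{-7}$ contributes at most $\sum_{j<J}(2^{j+1}r_*)^p C n^{-7} \lesssim r_0^p\, n^{-7}$, because the geometric sum is dominated by its last term $(2^J r_*)^p \le r_0^p$.

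For the last term we again apply Theorem~\ref{thm:tail} at $j=J$. We have $2^{Js} \ge (r_0/(2r_*))^s = c_- r_0^s n/(2^{s+3} k)$, so $2^{Js} k/\log n \gtrsim n/\log n$. This gives $D_x^p\big(4e^{-c n/\log n} + Cn^{-7}\big)$.

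The step requiring care is absorbing these $n^{-7}$ and $e^{-cn/\log n}$ remainders into $(k/n)^{p/s}$. Since $k \ge 1$, $(k/n)^{p/s} \ge n^{-p/s}$, so $n^{-7} \le (k/n)^{p/s}$ only when $p/s \le 7$. For larger $p/s$ I would first try to strengthen the remainder: the $n^{-7}$ in Theorem~\ref{thm:tail} comes from the coupling error with block length $b_n = \lceil (8/\gamma)\log n\rceil$. Rerunning that step with $b_n = \lceil (M/\gamma)\log n\rceil$ for $M$ large depending on $p/s$ yields an $n^{-M+1}$ remainder, at the cost of changing $c_0$ (and hence the constants) by a factor depending on $p,s$. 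This is consistent with the constant being allowed to depend on $p,s$. The requirement that $n$ be sufficiently large in Theorem~\ref{thm:tail} is handled by noting that for the finitely many small $n$, the bound $R_{n,k}(x)\le D_x$ together with $(k/n)^{p/s}\ge n^{-p/s}$ gives the estimate after enlarging $C_{p,s}$. Collecting terms gives $\E[R_{n,k}(x)^p] \le C_{p,s}(k/n)^{p/s}$.
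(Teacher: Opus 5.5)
Your proposal follows the paper's proof. It uses the same dyadic split at $2^j r_*$ up to the last $J$ with $2^J r_*\le r_0$, inserts Theorem~\ref{thm:tail} term by term, sums the convergent series $\sum_j 2^{jp}e^{-c2^{js}}$, and uses $2^{J}r_*\le r_0$ to control the $n^{-7}$ remainders.

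At the end you are more careful than the paper. The paper concludes with $Cr_*^p + Cn^{-7}\le Cr_*^p$, which cannot hold uniformly over $k\ge K_0\log n$ once $p/s>7$ (take $k\asymp\log n$). It also drops the exponential part of the top term $D^p\,\P(R_{n,k}(x)>2^{J}r_*)$. You handle both, and lengthening the blocks in the proof of Theorem~\ref{thm:tail} is the right repair.

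One detail in that repair needs tightening. With $b_n=\lceil (M/\gamma)\log n\rceil$, the side condition $\varepsilon>4b_n$ of Lemma~\ref{lemma1}, where only $\varepsilon\ge 2^{js}k$ is known, now requires $2^{js}k\gtrsim (4M/\gamma)\log n$. At small $j$ the fixed $K_0$ may therefore no longer suffice, so you have two options:
\begin{enumerate}
\item let $K_0$ depend on $p/s$; or
\item bound the boundedly many terms with $2^{js}k\le 4b_n$ trivially by $(2^{j+1}r_*)^p$. This costs only a constant factor, because $b_n/k\le C(M,\gamma,K_0)$.
\end{enumerate}
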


If $\mu$ admits a density, then by the Lebesgue differentiation theorem, Assumption~\ref{ass:lower} holds with $s=d$ for $\mu$-almost every interior point at which the density is positive. Consequently, Theorem~\ref{thm:moment} applies for $\mu$-almost all such $x$. Theorem~\ref{thm:moment} is applicable in downstream analyses, since many local learning bounds depend on moments of the random neighborhood. We discuss this further in the discussion section.

\subsection{Matching non-asymptotic lower bound}

To show that the exponent $p/s$ is optimal, we next establish a matching lower bound. Unlike the upper bound, the lower bound is purely local and does not require any dependence condition.

\begin{theorem}[Matching lower bound]\label{thm:lower}
Assume Assumption~\ref{ass:upper}. Let $p>0$. Then whenever $(\frac{k}{2c_+ n})^{1/s} \le r_0$, we have
\[
  \E\big[R_{n,k}(x)^p\big]
  \ge
  \frac12\Big(\frac{k}{2c_+ n}\Big)^{p/s}.
\]
\end{theorem}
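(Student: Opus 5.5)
The plan is Markov's inequality in reverse, combined with a first-moment (union) bound on the counting process $N_n(x,r)$. This union bound needs no independence, since linearity of expectation holds under arbitrary dependence; that is why no mixing assumption appears in the statement.

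Set $r_\flat := \big(\frac{k}{2c_+ n}\big)^{1/s}$, which satisfies $r_\flat \le r_0$ by hypothesis. Because $R_{n,k}(x)^p \ge r_\flat^p\,\mathbf 1\{R_{n,k}(x) \ge r_\flat\}$, we have
\[
\E\big[R_{n,k}(x)^p\big] \ge r_\flat^p\,\P\big(R_{n,k}(x) \ge r_\flat\big).
\]
It therefore suffices to show $\P(R_{n,k}(x) \ge r_\flat) \ge 1/2$, or equivalently $\P(R_{n,k}(x) < r_\flat) \le 1/2$.

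For the probability bound I would use the counting duality from Section~\ref{sec:setup}. If $R_{n,k}(x) < r_\flat$, then at least $k$ sample points lie in the open ball of radius $r_\flat$, hence in the closed ball $B(x,r_\flat)$. So $N_n(x,r_\flat) \ge k$. Markov's inequality and the common marginal $\mu$ then give
\[
\P\big(R_{n,k}(x) < r_\flat\big) \le \P\big(N_n(x,r_\flat)\ge k\big) \le \frac{\E N_n(x,r_\flat)}{k} = \frac{n\,\mu(B(x,r_\flat))}{k}.
\]
Assumption~\ref{ass:upper}, valid since $r_\flat \le r_0$, bounds the right-hand side by $n c_+ r_\flat^s/k = 1/2$. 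Substituting this into the first display yields the claim.

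There is no real obstacle; the only care needed is the strict versus non-strict inequality between the open and closed balls. That is handled as above, because the event $\{R_{n,k}(x) < r\}$ implies $\{N_n(x,r) \ge k\}$ with the closed ball, which is the direction needed. Ties are harmless, since they only affect which point is labeled the $k$th neighbor, not the radius.
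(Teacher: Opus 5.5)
Your proof is correct and is essentially the paper's argument. Both apply Markov's inequality to $N_n(x,r)$ at $r=(k/(2c_+n))^{1/s}$, whose mean is at most $k/2$ by Assumption~\ref{ass:upper} with no dependence condition, and then use $\E[R_{n,k}(x)^p]\ge r^p\,\P(R_{n,k}(x)\ge r)$; the only cosmetic difference is that the paper works with $\{R_{n,k}(x)>r\}=\{N_n(x,r)<k\}$ directly, while you bound the complement via $\{R_{n,k}(x)<r\}\subseteq\{N_n(x,r)\ge k\}$.
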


Theorem~\ref{thm:lower} is conceptually simple but important: if a ball of radius $r$ contains at most order $r^s$ probability mass, then one cannot expect the $k$th nearest neighbor to lie much closer than $(k/n)^{1/s}$. Combining the upper and lower bounds yields the following exact-order characterization.
\begin{theorem}[Exact-order characterization]\label{thm:exact}
Assume Assumptions~\ref{ass:geom}, \ref{ass:lower}, \ref{ass:upper}, and \ref{ass:compact}, with the same exponent $s>0$. Then for every $p>0$ there exist constants $0 < c_{p,s} \le C_{p,s} < \infty$ such that, for all sufficiently large $n$ and all $K_0\log n \le k \le \kappa n$, we have
\[
  c_{p,s}\Big(\frac{k}{n}\Big)^{p/s}
  \le
  \E\big[R_{n,k}(x)^p\big]
  \le
  C_{p,s}\Big(\frac{k}{n}\Big)^{p/s}.
\]
\end{theorem}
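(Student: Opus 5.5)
The plan is to combine Theorem~\ref{thm:moment} and Theorem~\ref{thm:lower}; the only real work is checking that their hypotheses hold on the common range of $(n,k)$ and that the constants can be chosen uniformly. The upper inequality is immediate. Under Assumptions~\ref{ass:geom}, \ref{ass:lower}, and \ref{ass:compact}, Theorem~\ref{thm:moment} gives $\E[R_{n,k}(x)^p]\le C_{p,s}(k/n)^{p/s}$ whenever $K_0\log n\le k\le \kappa n$. We take $C_{p,s}$ to be that constant. If Theorem~\ref{thm:moment} is only valid for sufficiently large $n$, inherited from Theorem~\ref{thm:tail}, this is absorbed by the phrase ``for all sufficiently large $n$'' in the statement.

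For the lower inequality we apply Theorem~\ref{thm:lower}, which uses only Assumption~\ref{ass:upper}. We must verify its side condition $(k/(2c_+n))^{1/s}\le r_0$ on the range $k\le\kappa n$. Since $k/n\le\kappa$, it suffices that $\kappa\le 2c_+r_0^s$. To get this, we combine the two mass assumptions at a common radius. Let $r_0$ denote the minimum of the radii appearing in Assumptions~\ref{ass:lower} and~\ref{ass:upper}. Then
\[
c_-r_0^s\le\mu(B(x,r_0))\le c_+r_0^s,
\]
so $c_-\le c_+$. Hence
\[
\kappa<\frac{c_-r_0^s}{8}\le\frac{c_+r_0^s}{8}<2c_+r_0^s,
\]
and the side condition holds. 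Theorem~\ref{thm:lower} then yields
\[
\E[R_{n,k}(x)^p]\ge \tfrac12(2c_+)^{-p/s}(k/n)^{p/s}.
\]
We set $c_{p,s}:=\tfrac12(2c_+)^{-p/s}$.

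Finally, $c_{p,s}\le C_{p,s}$ follows by comparing the two bounds at any admissible $(n,k)$. Both bounds hold at such a pair, so $c_{p,s}(k/n)^{p/s}\le C_{p,s}(k/n)^{p/s}$, and $(k/n)^{p/s}>0$. Admissible pairs exist for large $n$, since $K_0\log n\le\kappa n$ eventually. If needed, we can also simply enlarge $C_{p,s}$.

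The only step needing care is the consistent choice of $r_0$ across the two assumptions. Replacing $r_0$ by the minimum of the two radii keeps both assumptions valid. It only shrinks $\kappa$'s admissible range, which is harmless since $\kappa$ is a fixed constant below $c_-r_0^s/8$ for the new $r_0$. There is no probabilistic obstacle beyond what Theorems~\ref{thm:moment} and~\ref{thm:lower} already supply.
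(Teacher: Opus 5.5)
Your proposal is correct and follows the paper's proof, which simply combines Theorems~\ref{thm:moment} and~\ref{thm:lower}. Your check of the side condition $(k/(2c_+n))^{1/s}\le r_0$ via $c_-\le c_+$ fills in a detail the paper leaves implicit; alternatively, since $\mu\le 1$, Assumption~\ref{ass:upper} holds for every $r>0$ once $c_+$ is replaced by $\max\{c_+,r_0^{-s}\}$, so you need not tie $\kappa$ to the shrunken common radius.
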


Theorem~\ref{thm:exact} shows that the canonical nearest-neighbor radii scale remains unchanged under geometric mixing. The dependence changes constants and concentration behavior, but not the fundamental exponent.

\section{Experiments}

\subsection{Moment convergence rate} \label{exp1:Moment convergence rate}

We test whether the moment-rate upper bound suggested by the proved $\alpha$-mixing setting remains visible under broader dependence structures. The implementation compares the i.i.d.\ benchmark with three dependent sequence models: a linear Gaussian state-space model, a hidden Markov model, and a stationary Gaussian-process sequence with a radial basis function (RBF)-like kernel generated by fast Fourier transform (FFT). All observations are marginally transformed to the cube $[0,1]^d$.

We sample 1000 points uniformly from the interior of the cube and report the mean $p$-th moment of the $k$-NN radius. For each $d\in\{1,3,5\}$, we choose $p$ such that $p/d\in\{1,2,3,4,5\}$. We use exponentially growing, dimension-dependent sample sizes $n(d)=100\cdot 2^d\cdot m$ for $m\in\{1,2,4,8,16,32\}$. The number of NNs is chosen as $k=\lfloor n^\beta\rceil$ with $\beta$ on a 20-point grid in $[0.1,0.9]$ for slope fitting. The slope is the coefficient $b$ from $\log M = a + b\log(k/n)$, with theoretical slope $p/d$ for the i.i.d.\ and $\alpha$-mixing setting.

\begin{figure}[H]
    \centering
    \includegraphics[width=\textwidth]{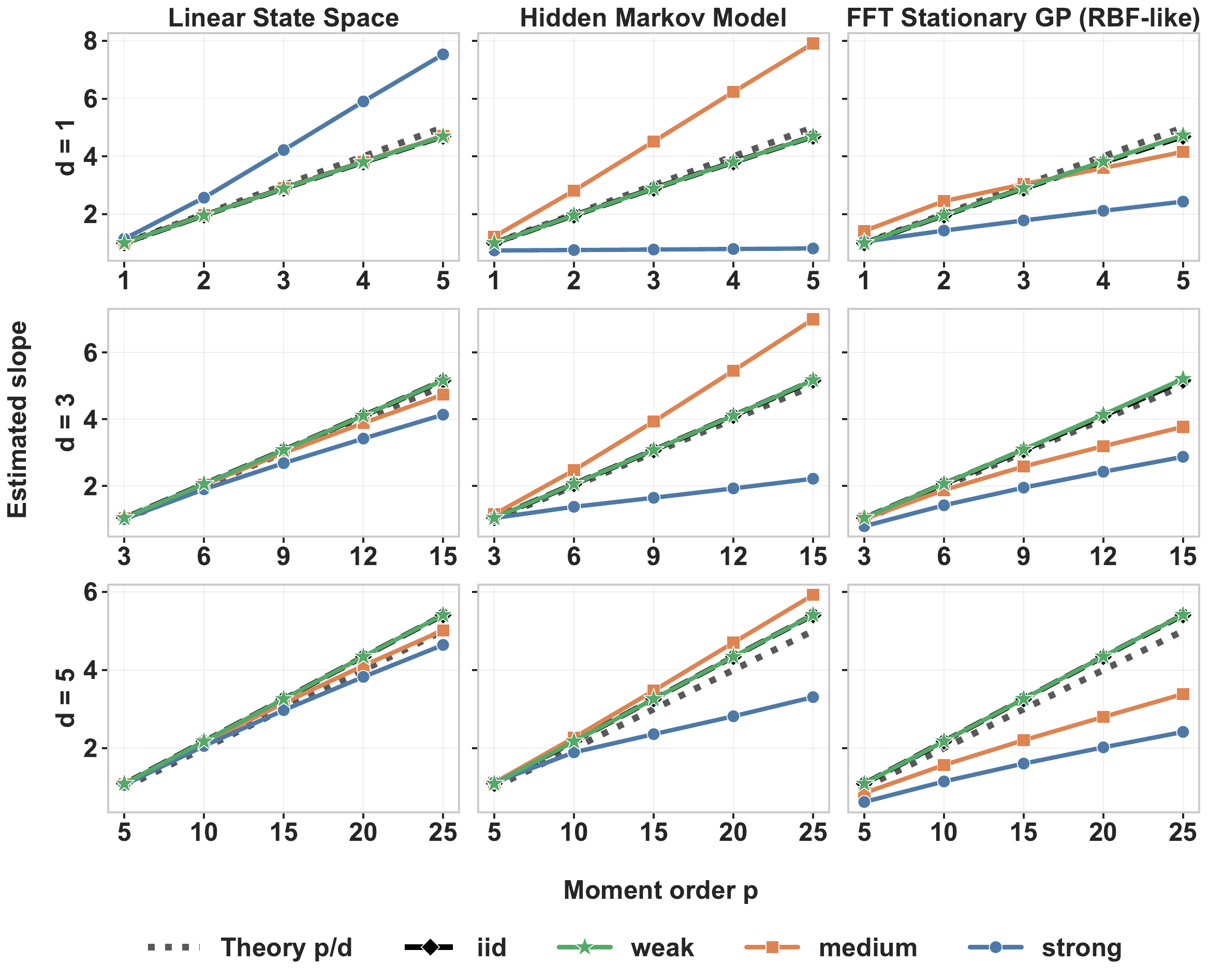}
    \caption{Slope summary for Experiment~1: rows correspond to $d=1,3,5$, columns correspond to the three dependent model classes, and the shared legend compares the theoretical line $p/d$, the i.i.d.\ benchmark, and weak/medium/strong dependence.}
    \label{fig:exp1-slope-grid}
\end{figure}

Across the three data- generating processes (linear state-space, hidden Markov, and FFT Gaussian process), we impose weak/medium/strong dependence via model-specific persistence parameters. Figure~\ref{fig:exp1-slope-grid} shows the expected pattern: weak dependence stays close to the i.i.d.\ baseline and the theoretical rate $p/d$, while medium and strong dependence diverge.

\subsection{Entropy estimation} \label{exp2: Entropy estimation}

This experiment examines the intrinsic-dimension message of the nearest-neighbor theory in a representation-learning setting. We state our results in terms of a local intrinsic dimension $s$. A Kozachenko--Leonenko entropy estimate is built from the NN radii and the corresponding $s$-dimensional volume correction. Thus, if a learned representation preserves the local $s$-dimensional geometry, the entropy estimator should converge to the entropy of the intrinsic process rather than behave as though the ambient dimension is a larger $d$. We consider the setup in which observations have ambient dimension $d$, but the true data-generating process has intrinsic dimension $s$.

The experiment generates an $s$-dimensional Gaussian AR(1) latent process, whose entropy is available in closed form as $H_s=(s/2)\log(2\pi e)$, and maps it into $20$-dimensional observed data through a linear augmentation. Figure~\ref{fig:exp2-entropy} reports the results for $s\in\{1,3,5\}$ under three temporal-dependence regimes for the sample sequence: i.i.d. ($\rho=0$), weak AR(1) dependence ($\rho=0.3$), and strong AR(1) dependence ($\rho=0.6$), using the current run configuration with $1000$ replications and number of NNs $k=\max\{2,\lceil n^{0.1}\rceil\}$. The oracle estimator on the latent $s$-dimensional data and the estimator based on the top $s$ coordinates of a $5$-component principal component analysis (PCA) representation both move toward the true entropy as $n$ grows. Our reason for expecting the entropy estimator based on PCA to match the intrinsic-dimension behavior is that, in the implementation, the latent coordinates are independent across coordinates even though each coordinate follows the same AR(1) latent process. The $20$-dimensional augmented observations are highly correlated in coordinates. PCA projects the observation to a low-dimensional representation, so the leading PCA coordinates are much closer to the intrinsic geometry and preserve the same weak-dependence structure. The comparison therefore supports the conclusion that PCA recovers the intrinsic entropy behavior and that weak dependence mainly changes constants rather than the qualitative convergence pattern; implementation details and numerical tables are left to the Appendix.

\begin{figure}[H]
    \centering
    \includegraphics[width=1\textwidth]{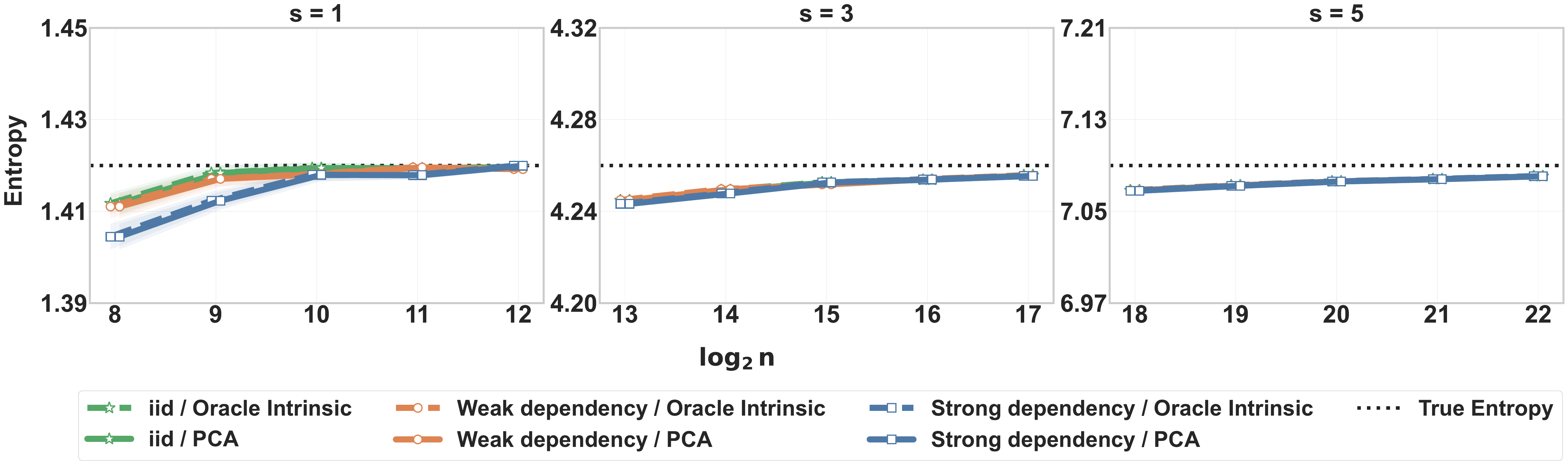}
    \caption{Experiment~2 entropy estimation. The dotted horizontal line is the true entropy. Colors encode the dependence level and markers distinguish the oracle intrinsic estimate from the PCA estimate.}
    \label{fig:exp2-entropy}
\end{figure}

\subsection{Real-data evaluation}\label{exp3: Real-data_evaluation}


In this section, we complement the synthetic experiments with real-world time-series benchmarks. We compare two baselines: a raw $k$-nearest-neighbors (Raw-$k$NN) method operating directly in standardized observation space, and a PCA $k$-nearest-neighbors (PCA-$k$NN) method that first projects each input onto a lower-dimensional representation using PCA before applying nearest-neighbor prediction. We evaluate Raw-$k$NN and PCA-$k$NN on three real-data tasks: long-horizon forecasting, short-horizon forecasting, and classification. The datasets are from the Time Series Pile \citep{goswami2024moment}, a large public time-series benchmark collection that includes the Informer long-horizon forecasting datasets \citep{zhou2021informer}, the Monash Time Series Forecasting Archive \citep{godahewa2021monash}, and the UCR Time Series Archive \citep{dau2019ucr}. We follow the benchmark setup and compare our results against MOMENT, a family of pre-trained time-series foundation models \citep{goswami2024moment}. In both models, the input features are standardized before nearest-neighbor prediction. We tune the number of neighbors $k \in \{1,3,5,7,9,15,21,31\}$, the neighbor weighting scheme (uniform/distance), and the distance metric (Manhattan/Euclidean). For PCA-$k$NN, we additionally tune the embedding dimension over $\{8,16,32,48,64\}$.

Table~\ref{tab:real_data_results} summarizes average performance across datasets for long-horizon forecasting, short-horizon forecasting, and classification. Figure~\ref{fig:long_horizon_mse} reports long-horizon forecasting mean squared error (MSE) separately for each dataset and forecast horizon. Figure~\ref{fig:short_horizon_vs_classification} provides complementary views of short-horizon forecasting and classification: its left panel shows short-horizon forecasting symmetric mean absolute percentage error (sMAPE) under both the within-group and source-to-target transfer settings, while its right panel shows the distribution of classification accuracy with MOMENT summary statistics included as reference lines. Together, these results are supportive in two main ways. First, Raw-$k$NN and PCA-$k$NN remain informative across all three task families. In long-horizon forecasting, the two methods achieve average MSE $0.628$ and $0.627$, respectively, compared with $0.617$ for MOMENT. In short-horizon forecasting, both methods produce nontrivial predictions in both the within-group and source-to-target transfer settings, although errors increase substantially in the transfer setting. In classification, both methods achieve mean accuracy $0.748$, compared with $0.794$ for MOMENT. These findings indicate that nearest-neighbor structure remains informative on real dependent data. Second, Raw-$k$NN and PCA-$k$NN achieve similar performance across tasks, consistent with adaptation to the intrinsic dimension of dependent data even under dependence. Their relative advantage varies across datasets, but neither method consistently dominates the other. 

Although MOMENT generally outperforms the Raw-$k$NN and PCA-$k$NN baselines, this does not contradict the theory. Both nearest-neighbor methods remain effective on real dependent data, and PCA-$k$NN performs comparably to Raw-$k$NN even after reducing the input dimension. This shows that dimension reduction can preserve the local structure needed for neighborhood-based prediction.



\begin{table}
  \caption{Comparison of Raw-$k$NN, PCA-$k$NN, and MOMENT on real-data tasks under a common benchmark setup. Performance is averaged over datasets within each task; MOMENT results are taken from \cite{goswami2024moment}.}
  \label{tab:real_data_results}
  \centering
  \footnotesize
  \begin{tabular}{@{}lcccc@{}}
    \toprule
    Task & Metric & Raw-$k$NN & PCA-$k$NN & MOMENT \\
    \midrule
    Long-horizon forecasting & MSE & 0.628 & 0.627 & 0.617 \\
    Classification & Accuracy & 0.748 & 0.748 & 0.794 \\
    Short-horizon forecasting (within-group) & sMAPE & 16.199 & 16.488 & -- \\
    Short-horizon forecasting (transfer) & sMAPE & 27.210 & 27.369 & 14.525 \\
    \bottomrule
  \end{tabular}
\end{table}

\begin{figure}[H]
    \centering
    \includegraphics[width=1\linewidth]{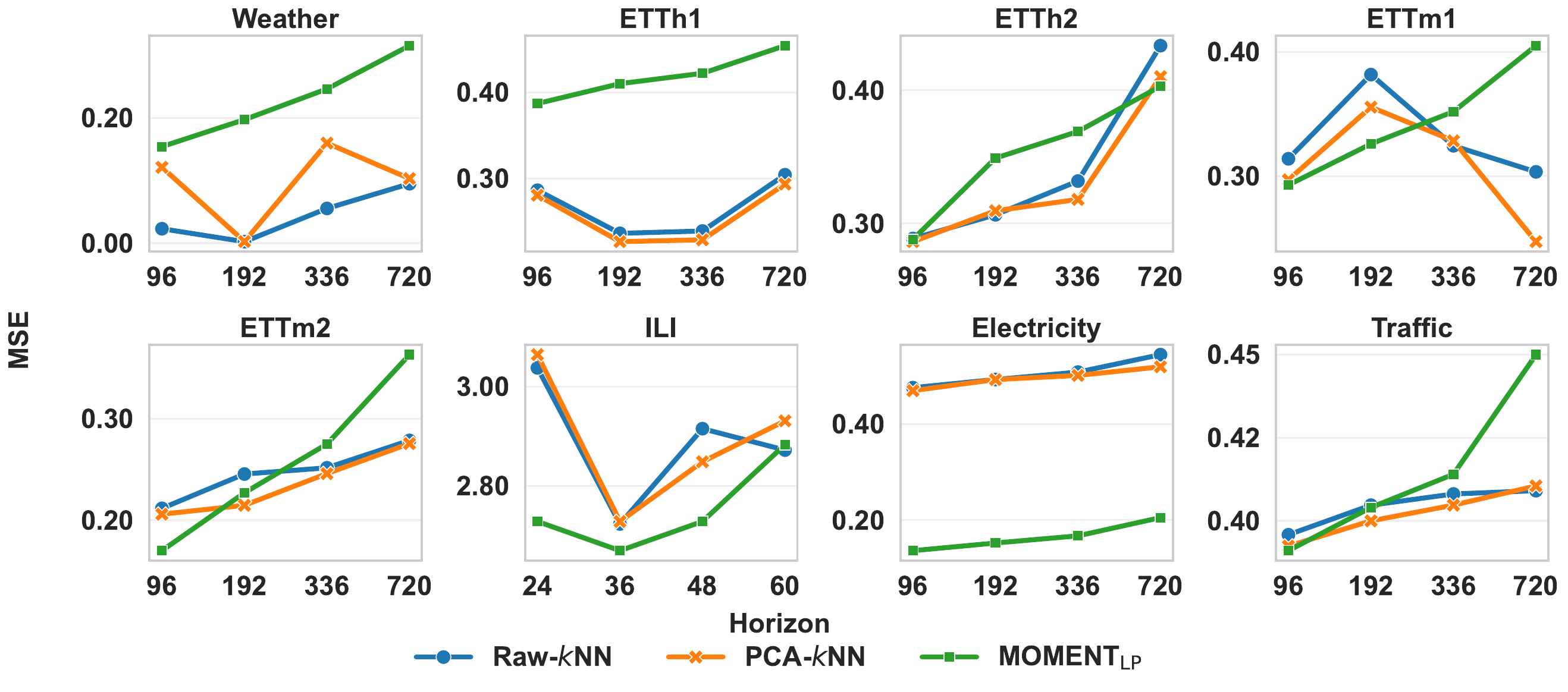}
    \caption{Comparison of long-horizon forecasting using test MSE across datasets and forecast horizons. It compares Raw-$k$NN and PCA-$k$NN with MOMENT across benchmark settings.}
    \label{fig:long_horizon_mse}
\end{figure}


\begin{figure}[H]
    \centering
    \begin{subfigure}[t]{0.48\linewidth}
        \centering
        \includegraphics[width=\linewidth]{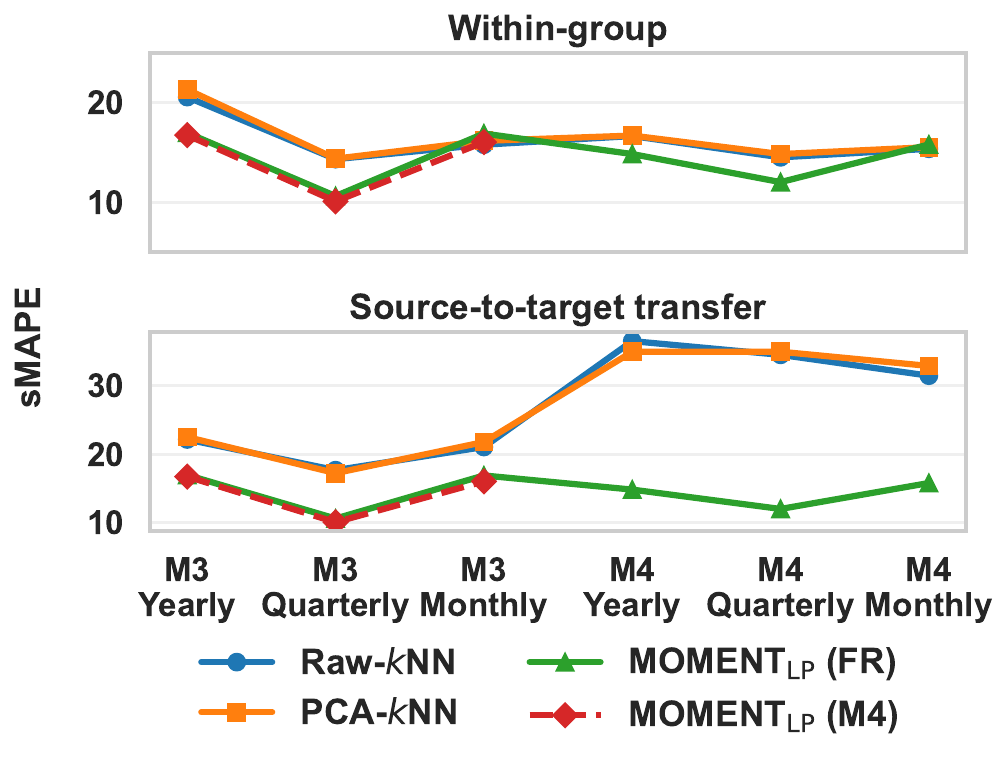}
    \end{subfigure}
    \hfill
    \begin{subfigure}[t]{0.51\linewidth}
        \centering
        \includegraphics[width=\linewidth]{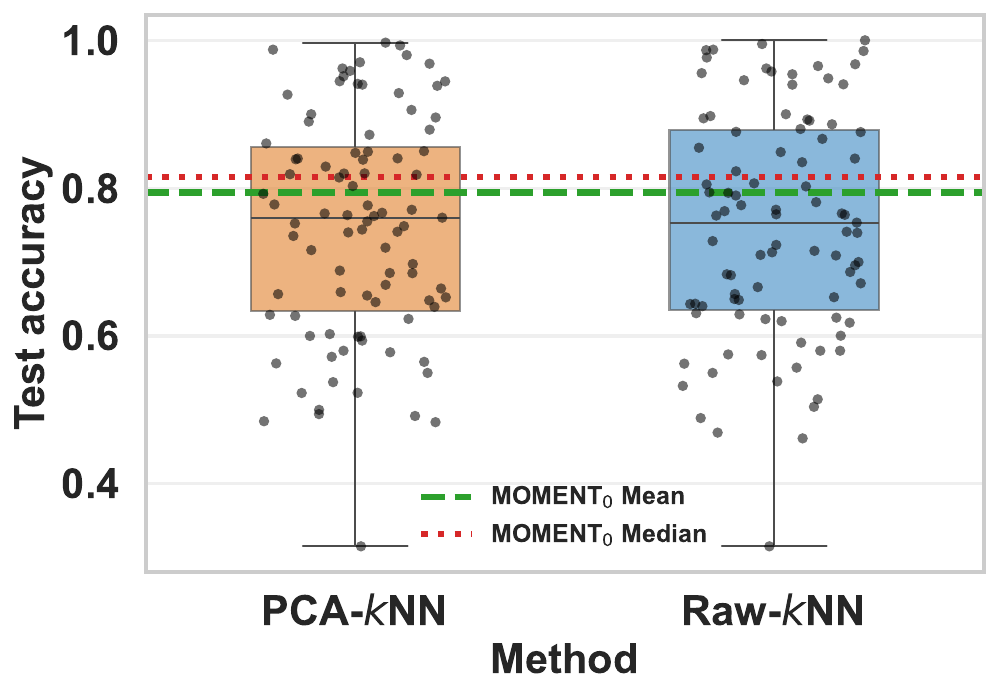}
    \end{subfigure}
    \caption{Comparisons on short-horizon forecasting and classification. The left panel compares forecasting under within-group and source-to-target transfer settings. The right panel shows classification accuracy across UCR datasets, with MOMENT summary statistics shown as reference lines.}
    \label{fig:short_horizon_vs_classification}
\end{figure}

\section{Discussions}\label{sec:discussions}

\paragraph*{Constants versus exponents.}
One main message of the theory is that dependence and local geometry play qualitatively different roles. The local exponent $s$ determines the fundamental scale $(k/n)^{1/s}$, whereas the dependence structure determines constants and concentration quality. This distinction is important both conceptually and empirically: it suggests that weak dependence should affect finite-sample behavior without changing the asymptotic behavior.

\paragraph*{On uniform convergence.} We study nearest-neighbor radii at a fixed query point $x$, both for almost sure convergence and for finite-sample tail and moment bounds. This pointwise analysis matches the local nature of nearest-neighbor methods and yields useful results for downstream learning methods at a given test point \citep{biau2015lectures}. Extending the theory to uniform convergence over the support is interesting, especially for global guarantees on $k$-NN regression, classification, or graph construction \citep{biau2015lectures, jiang2019non}. However, such a result would require additional assumptions and analysis: one would need the lower- and upper-mass conditions to hold uniformly over the support, together with uniform concentration for the dependent counting process indexed by balls, which is typical in empirical process theory under mixing conditions \citep{yu1994rates}. Since our goal is to isolate the effect of dependence on nearest-neighbor radii, we leave the general uniform theory to future work. The pointwise results are also standard in nonparametric theory, where one first identifies the local behavior at a fixed query point and then generalizes to global sup-norm under stronger regularity conditions \citep{gyorfi2002distribution}.

\paragraph*{Implications for learning methods.}
More broadly, we provide a geometric control principle for a wide class of learning methods under dependent sampling. Since the $k$-NN radius determines the effective neighborhood size around a query point, our results imply that, under geometric mixing, dependent observations behave as if the relevant scale were still $(k/n)^{1/s}$ up to constants and a dependence-induced concentration penalty. As a result, many heuristics on the methods from the i.i.d.\ setting continue to apply. In particular, whenever the performance of a method is driven by the size of adaptive neighborhoods---as in $k$-NN regression, local averaging, adaptive bandwidth selection, graph construction, or neighborhood-based manifold methods---our results can be used as an ingredient to transfer the performance to dependent data. From this perspective, we isolate a basic geometric quantity whose behavior under dependence can serve as a building block for future analyses of nonparametric and graph learning algorithms.

\section*{Acknowledgments}
We thank Fang Han for partially motivating this work and for helpful comments.

{
\bibliographystyle{apalike}
\bibliography{AMS}
}

\newpage
\appendix

\section{Proofs}

We begin with a Bernstein-type inequality for \(\alpha\)-mixing sequences.

\begin{lemma}[Theorem 5 of \cite{rio1995functional}; Theorem 2.1 of \cite{liebscher1996strong}]\label{lemma1}
Let \([Z_i]_{i=1}^n\) be a centered sequence of random variables such that
\[
\E[Z_i]=0
\qquad\text{and}\qquad
|Z_i|\le S(n)<\infty,
\qquad i\in \zahl{n}.
\]
Then, for any \(n\ge 1\), any \(m\in \zahl{n}\), and any \(\epsilon>4mS(n)\),
\[
\P\Big(\Big|\sum_{i=1}^n Z_i\Big|>\epsilon\Big)
\le
4\exp \left(
-\frac{\epsilon^2}{64\frac{n}{m}\sigma_{n,m}^2+\frac83\,\epsilon m S(n)}
\right)
+
4\frac{n}{m}\alpha_m,
\]
where
\[
\sigma_{n,m}^2
:=
\sup_{0\le j\le n-1}
\E\Big[\sum_{i=j+1}^{(j+m)\wedge n} Z_i\Big]^2.
\]
\end{lemma}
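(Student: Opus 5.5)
The plan is the classical blocking-plus-coupling argument behind \cite{rio1995functional} and \cite{liebscher1996strong}. It reduces the dependent sum to a sum of independent block sums, to which the ordinary Bernstein inequality applies. The cost of the reduction is controlled by $\alpha_m$. The hypothesis $\epsilon>4mS(n)$ is exactly what turns the coupling error into the additive term $4\frac{n}{m}\alpha_m$.

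\textbf{Step 1 (blocking).} Partition $\zahl{n}$ into consecutive blocks $I_1,I_2,\dots,I_L$ of length $m$, the last possibly shorter, so that $L\le n/m+1$. Set $Y_\ell:=\sum_{i\in I_\ell}Z_i$, so that $|Y_\ell|\le mS(n)$, $\E[Y_\ell]=0$ and $\E[Y_\ell^2]\le\sigma_{n,m}^2$ by the definition of $\sigma_{n,m}^2$. Split $\sum_i Z_i=\sum_{\ell\text{ odd}}Y_\ell+\sum_{\ell\text{ even}}Y_\ell$, which gives
\[
\P\Big(\Big|\sum_{i=1}^n Z_i\Big|>\epsilon\Big)\le \P\Big(\Big|\sum_{\ell\text{ odd}}Y_\ell\Big|>\tfrac{\epsilon}{2}\Big)+\P\Big(\Big|\sum_{\ell\text{ even}}Y_\ell\Big|>\tfrac{\epsilon}{2}\Big).
\]
Within each parity class, consecutive blocks are separated by at least $m$ indices. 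Hence the $\sigma$-field of each such block and that of the preceding blocks of the same class have strong-mixing coefficient at most $\alpha_m$.

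\textbf{Step 2 (coupling).} Treat the odd class; the even class is identical. Enlarge the probability space by independent uniforms and apply Rio's coupling lemma for strong mixing recursively, one block at a time. For bounded real variables this yields $Y_\ell^*$ with the same law as $Y_\ell$, such that the $Y_\ell^*$ are mutually independent and
\[
\E|Y_\ell-Y_\ell^*|\le 2\,\alpha_m\,\|Y_\ell\|_\infty\le 2\alpha_m mS(n).
\]
By Markov's inequality,
\[
\P\Big(\Big|\sum(Y_\ell-Y^*_\ell)\Big|>\tfrac{\epsilon}{4}\Big)\le \frac{4}{\epsilon}\cdot\frac{n}{m}\cdot 2\alpha_m mS(n)\le 2\frac{n}{m}\alpha_m,
\]
where the last inequality uses $\epsilon>4mS(n)$. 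Summing over the two parity classes produces the term $4\frac nm\alpha_m$.

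\textbf{Step 3 (Bernstein for the independent part).} Apply the classical Bernstein inequality to $\sum Y^*_\ell$ at threshold $\epsilon/4$. There are at most $\sim n/(2m)$ summands, each with variance at most $\sigma_{n,m}^2$ and sup-norm at most $mS(n)$. This gives a two-sided bound of the form
\[
2\exp\Big(-\frac{(\epsilon/4)^2}{2\,\frac{n}{m}\sigma_{n,m}^2+\frac23\cdot\frac{\epsilon}{4}\,mS(n)}\Big),
\]
up to the precise bookkeeping for the number of blocks. Multiplying numerator and denominator by $16$ yields the exponent with constants $64$ (after absorbing the factor from the block count and the rounding $L\le n/m+1$) and $\tfrac83$. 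Summing over the two parity classes gives the prefactor $4$.

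\textbf{Main obstacle.} The delicate point is Step 2: producing, for merely $\alpha$-mixing (not $\beta$-mixing) data, independent copies with an $L^1$ coupling error linear in $\alpha_m$. Berbee's lemma is unavailable here. I would invoke Rio's quantile-transform coupling for real-valued bounded variables, which is where boundedness of the $Z_i$ is essential. I would build the copies inductively, each $Y^*_\ell$ independent of $\sigma(Y_1,\dots,Y_{\ell-1},Y_1^*,\dots,Y_{\ell-1}^*)$, checking that the extra randomization does not increase the mixing coefficient. The remaining constant-chasing is routine.
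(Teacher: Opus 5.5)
The paper gives no proof of this lemma; it imports it from Rio and Liebscher. Your route is the standard argument behind that result: alternating blocks of length $m$, Rio's quantile coupling applied recursively with auxiliary uniforms, Markov's inequality for the coupling error, and Bernstein for the independent copies. Two parts of your sketch are fine as written. First, the randomization check you flag holds: adjoining uniforms independent of $\mathcal{A}\vee\sigma(Y_\ell)$ leaves $\alpha(\mathcal{A},\sigma(Y_\ell))$ unchanged. Second, the Bernstein bookkeeping holds: each parity class has at most $\lceil L/2\rceil\le n/m$ summands, so you even get $32$ in place of $64$. However, one inequality in Step 2 is false, and it matters because the lemma is stated with explicit constants.

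\textbf{The gap.} The bound $\E|Y_\ell-Y_\ell^*|\le 2\alpha_m\|Y_\ell\|_\infty$ fails for signed variables under the paper's (Rosenblatt) coefficient $\alpha_m=\sup|\P(A\cap B)-\P(A)\P(B)|$. Rio's construction gives
\[
\E|Y-Y^*|=\int_{\R}\E\big|\P(Y\le t\mid\mathcal{A})-\P(Y\le t)\big|\,dt.
\]
The integrand is at most $2\alpha(\mathcal{A},\sigma(Y))$ and vanishes outside an interval of length $2\|Y\|_\infty$. So what you actually get is $\E|Y_\ell-Y_\ell^*|\le 4\alpha_m mS(n)$.

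This cannot be improved. Take $Y$ uniform on $\{-M,M\}$ and $\mathcal{A}=\sigma(Y)$. Then $\alpha(\mathcal{A},\sigma(Y))=1/4$, yet every $Y^*$ independent of $\mathcal{A}$ with the law of $Y$ has $\E|Y-Y^*|=M=4\alpha M$. The $2\alpha\|Y\|_\infty$ form is valid for nonnegative $Y$, or for a mixing coefficient normalized to be twice Rosenblatt's, but your block sums are signed.

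With the correct constant, your count of $n/m$ blocks per parity class gives $8\frac{n}{m}\alpha_m$, not $4\frac{n}{m}\alpha_m$. Your stated constant came out only because that overcount (a class really has about $n/(2m)$ blocks) cancelled the misquoted factor.

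\textbf{The repair} is a sharper count:
\begin{enumerate}
\item The first block in each parity class needs no coupling (take $Y^*=Y$ there). So at most $\max(L-2,0)$ blocks are coupled, where $L=\lceil n/m\rceil<n/m+1$.
\item Apply Markov at level $\epsilon/4$ in each class and use $\epsilon>4mS(n)$. Each coupled block then contributes at most $\frac{4}{\epsilon}\cdot 4\alpha_m mS(n)\le 4\alpha_m$.
\item The total coupling term is therefore at most $4\max(L-2,0)\,\alpha_m\le 4\frac{n}{m}\alpha_m$.
\end{enumerate}
Combined with your Bernstein step, this yields the lemma with the stated constants.
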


\subsection{Proof of Theorem~\ref{thm:as}}

\begin{proof}

Fix \(x\in \R^d\). Since \(\mathcal X_{(k)}^n(x)\in {\rm supp}(\mu)\) almost surely and $r_x =\inf\{\|y-x\|: y\in {\rm supp}(\mu)\}$, we have
\[
\|\mathcal X_{(k)}^n(x)-x\| \ge r_x
\qquad\text{a.s. for every }n.
\]
Thus, it remains to prove that
\[
\limsup_{n\to\infty}\|\mathcal X_{(k)}^n(x)-x\| \le r_x
\qquad\text{a.s.}
\]

Fix \(\varepsilon>0\), and define
\[
p_x := \mu(B_{x,r_x+\varepsilon})
= \P\big(\|X_{n,i}-x\|\le r_x+\varepsilon\big).
\]
Since \(B_{x,r_x+\varepsilon}\) intersects \({\rm supp}(\mu)\) by definition of \(r_x\), we have \(p_x>0\).

We observe that the event \(\|\mathcal X_{(k)}^n(x)-x\|>r_x+\varepsilon\) occurs if and only if fewer than \(k\) observations in the $n$-th row fall inside \(B_{x,r_x+\varepsilon}\). Then we have
\[
\P\big(\|\mathcal X_{(k)}^n(x)-x\|>r_x+\varepsilon\big)
=
\P\Big(\sum_{i=1}^n \mathbf 1\{\|X_{n,i}-x\|\le r_x+\varepsilon\}<k\Big).
\]
We set
\[
Z_{n,i}
:=
\mathbf 1\{\|X_{n,i}-x\|\le r_x+\varepsilon\}
-
\P\big(\|X_{n,i}-x\|\le r_x+\varepsilon\big)
=
\mathbf 1\{\|X_{n,i}-x\|\le r_x+\varepsilon\}-p_x.
\]
Then \(\E[Z_{n,i}]=0\) and \(|Z_{n,i}|\le 1\). Since \(Z_{n,i}\) is a measurable function of \(X_{n,i}\), the sequence \([Z_{n,i}]_{i=1}^n\) is again \(\alpha\)-mixing with the same mixing coefficients as \([X_{n,i}]_{i=1}^n\). Moreover, for any \(m\in\{1,\dots,n\}\) and any \(0\le j\le n-1\), the covariance inequality \citep[Corollary of Lemma 2.1]{davydov1968convergence} yields
\begin{align*}
\E\Big[\sum_{i=j+1}^{(j+m)\wedge n} Z_{n,i}\Big]^2
&=
\Var\Big(\sum_{i=j+1}^{(j+m)\wedge n} Z_{n,i}\Big) \\
&\le \sum_{i=j+1}^{(j+m)\wedge n}\Var(Z_{n,i})
+2\sum_{i=j+1}^{(j+m)\wedge n}\sum_{k>i}\Cov(Z_{n,i},Z_{n,k}) \\
&\le
m + 24\sum_{i=j+1}^{(j+m)\wedge n}\sum_{k>i}\alpha_{k-i}
\|Z_{n,i}\|_\infty \|Z_{n,k}\|_\infty \\
&\le
m + 24\sum_{i=j+1}^{(j+m)\wedge n}\sum_{k>i}\alpha_{k-i} \\
&\le
m + 24m\sum_{\ell=1}^\infty \alpha_\ell.
\end{align*}
Since \(\sum_{\ell=1}^\infty \alpha_\ell<\infty\), we set $M = 1+24\sum_{\ell=1}^\infty \alpha_\ell <\infty$, then
\[
\sigma_{n,m}^2
:=
\sup_{0\le j\le n-1}
\E\Big[\sum_{i=j+1}^{(j+m)\wedge n} Z_{n,i}\Big]^2
\le Mm.
\]
Since \(k/n\to 0\) and \(p_x>0\), for all sufficiently large $n$, we have $k-np_x \le -\frac{np_x}{2}$. Then
\begin{align*}
\P\big(\|\mathcal X_{(k)}^n(x)-x\|>r_x+\varepsilon\big)
&=
\P\Big(\sum_{i=1}^n Z_{n,i}<k-np_x\Big) \\
&\le
\P\Big(\Big|\sum_{i=1}^n Z_{n,i}\Big|>np_x-k\Big) \\
&\le
\P\Big(\Big|\sum_{i=1}^n Z_{n,i}\Big|>\frac{np_x}{2}\Big).
\end{align*}

We now apply Lemma~\ref{lemma1} with $m_n = \lfloor \frac{3}{64}\frac{np_x}{\log n}\rfloor$. Then \(m_n\to\infty\) and \(m_n/n \to 0\). Therefore for all sufficiently large $n$, we have $64M \le \frac{4}{3}p_x m_n$ and $
\frac{np_x}{2}>4m_n$, and then the conditions of Lemma~\ref{lemma1} are satisfied with \(S(n)=1\) and \(\epsilon_n=np_x/2\). Using \(\sigma_{n,m_n}^2\le M m_n\), we obtain
\begin{align*}
\P\Big(\Big|\sum_{i=1}^n Z_{n,i}\Big|>\frac{np_x}{2}\Big)
&\le
4\exp \left[
-\frac{(np_x/2)^2}{64(n/m_n)\sigma_{n,m_n}^2 + \frac83 (np_x/2)m_n}
\right]
+4\frac{n}{m_n}\alpha_{m_n} \\
&\le
4\exp \left[
-\frac{n^2p_x^2/4}{64Mn + \frac43 np_x m_n}
\right]
+4\frac{n}{m_n}\alpha_{m_n} \\
&\le
4\exp \left[
-\frac{n^2p_x^2/4}{\frac83 np_x m_n}
\right]
+4\frac{n}{m_n}\alpha_{m_n} \\
&=
4\exp \left(-\frac{3}{32}\frac{np_x}{m_n}\right)
+4\frac{n}{m_n}\alpha_{m_n}.
\end{align*}
By the choice of \(m_n\), the exponential term is \(O(n^{-2})\).

We now assume the mixing coefficients satisfy $\alpha_m \lesssim m^{-(1+\gamma)}$ for some $\gamma>1$. Then
\[
\frac{n}{m_n}\alpha_{m_n} \lesssim \frac{n}{m_n^{2+\gamma}} \lesssim \frac{(\log n)^{2+\gamma}}{n^{1+\gamma}},
\]
which is summable in $n$. Then
\[
\sum_{n=1}^\infty
\P\big(\|\mathcal X_{(k)}^n(x)-x\|>r_x+\varepsilon\big)
<\infty.
\]
By the Borel--Cantelli lemma,
\[
\P\big(\|\mathcal X_{(k)}^n(x)-x\|>r_x+\varepsilon\ \text{i.o.}\big)=0.
\]
That is, for every fixed \(\varepsilon>0\),
\[
\|\mathcal X_{(k)}^n(x)-x\|\le r_x+\varepsilon
\qquad\text{eventually a.s.}
\]

Let \(\varepsilon=1/j\), \(j\in\mathbb N\), and intersect the resulting probability-one events. Then we obtain
\[
\limsup_{n\to\infty}\|\mathcal X_{(k)}^n(x)-x\|\le r_x
\qquad\text{a.s.}
\]
Together with the lower bound \(\|\mathcal X_{(k)}^n(x)-x\|\ge r_x\) a.s. for every $n$, we obtain
\[
\|\mathcal X_{(k)}^n(x)-x\| \to r_x
\qquad\text{almost surely}.
\]
This completes the proof.

\end{proof}

\subsection{Proof of Theorem~\ref{thm:tail}}

\begin{proof}

Fix \(r>0\), and define
\[
Y_i(r):=\mathbf 1\{\|X_{n,i}-x\|\le r\},
\qquad
p_r:=\E[Y_i(r)]=\mu(B(x,r)).
\]
Let
\[
N_n(x,r):=\sum_{i=1}^n Y_i(r).
\]
By the definition of the \(k\)-nearest-neighbor radius,
\[
R_{n,k}(x)>r
\iff N_n(x,r)<k.
\]
We now use a blocking argument. Partition the row into alternating kept and discarded blocks of length \(b_n\). Set $M_n=\lfloor \frac{n}{2b_n}\rfloor$, and for \(u=1,\dots,M_n\), define the kept block
\[
I_u:=\{(2u-2)b_n+1,\dots,(2u-1)b_n\}.
\]
Let
\[
W_u(r):=\sum_{i\in I_u}Y_i(r),
\qquad
S_n^{\rm keep}(r):=\sum_{u=1}^{M_n}W_u(r).
\]
Since \(S_n^{\rm keep}(r)\le N_n(x,r)\), we obtain
\[
\mathbb P(R_{n,k}(x)>r)
=
\mathbb P(N_n(x,r)<k)
\le
\mathbb P(S_n^{\rm keep}(r)<k).
\]

Consider the radius
\[
r_j=2^j r_*(n,k)
=
2^j\Big(\frac{8k}{c_- n}\Big)^{1/s},
\qquad j\ge 0,
\]
and assume \(r_j\le r_0\). By Assumption~\ref{ass:lower},
\[
p_{r_j}
=
\mu(B(x,r_j))
\ge c_- r_j^s
=
8\cdot2^{js}\frac{k}{n}.
\]
Since $b_n/n \to 0$ and $M_n b_n \ge n/4$ for all sufficiently large $n$,
\[
\mu_j=\E[S_n^{\rm keep}(r_j)]
=
M_n b_n p_{r_j}
\ge n\,p_{r_j}/4
\ge
2^{js+1}k.
\]
Therefore, on the event \(\{S_n^{\rm keep}(r_j)<k\}\), since $\mu_j\ge 2k$,
\[
\mu_j-S_n^{\rm keep}(r_j)>\mu_j-k\ge \mu_j/2.
\]
Then
\[
\{S_n^{\rm keep}(r_j)<k\}
\subseteq
\left\{
\left|S_n^{\rm keep}(r_j)-\mu_j\right|>\frac{\mu_j}{2}
\right\}.
\]
Therefore, if we define
\[
Z_u:=W_u(r_j)-\E[W_u(r_j)],
\qquad u=1,\dots,M_n,
\]
we have
\[
\mathbb P(R_{n,k}(x)>r_j)
\le
\mathbb P\Big(\Big|\sum_{u=1}^{M_n}Z_u\Big|>\frac{\mu_j}{2}\Big).
\]

Next we apply Lemma~\ref{lemma1} for the block sequence \([Z_u]_{u=1}^{M_n}\). Since each \(Z_u\) is measurable with respect to the coordinates in the kept block \(I_u\), and successive kept blocks are separated by a discarded block of length \(b_n\), the sequence \([Z_u]_{u=1}^{M_n}\) is \(\alpha\)-mixing with coefficients
\[
\bar\alpha_\ell
:=
\sup_{u\ge 1}\sup_{A\in\sigma(Z_1,\dots,Z_u),B\in\sigma(Z_{u+\ell},Z_{u+\ell+1},\dots)}
|\mathbb P(A\cap B)-\mathbb P(A)\mathbb P(B)|
\le
\alpha_{\ell b_n},
\qquad \ell\ge 1.
\]
In particular, $\bar\alpha_1\le \alpha_{b_n}\le e^{-\gamma b_n}\le n^{-8}$, as long as \(b_n\) is chosen with a sufficiently large constant times \(\log n\). Also, $0\le W_u(r_j)\le b_n$, so $|Z_u| = |W_u(r_j)-\E W_u(r_j)| \le b_n$. Thus the boundedness parameter in Lemma~\ref{lemma1} is $S(M_n)=b_n$.

To bound the variance term \(\sigma^2_{M_n,1}\), since \(W_u(r_j)\) is a sum of \(b_n\) Bernoulli indicators, we have $W_u(r_j)^2\le b_n W_u(r_j)$, and therefore
\[
\Var(Z_u)
\le \E[W_u(r_j)^2]
\le b_n\,\E[W_u(r_j)]
=
b_n^2 p_{r_j}.
\]
Then for the block sequence with block parameter \(m=1\),
\[
\sigma^2_{M_n,1}
=
\sup_{0\le v\le M_n-1}\E[Z_{v+1}^2]
\le
b_n^2 p_{r_j}.
\]

We now apply Lemma~\ref{lemma1} to \([Z_u]_{u=1}^{M_n}\) with \(m=1\) and threshold $\varepsilon=\mu_j/2=M_n b_n p_{r_j}/2$. Since $\varepsilon\ge k\ge K_0\log n$, and \(b_n\asymp \log n\), we may choose \(K_0\) sufficiently large so that $\varepsilon>4b_n$ for all sufficiently large $n$. Then by Lemma~\ref{lemma1},
\begin{align*}
\mathbb P\Big(\Big|\sum_{u=1}^{M_n}Z_u\Big|>\frac{\mu_j}{2}\Big)
&\le
4\exp \left(
-\frac{(\mu_j/2)^2}
{64 M_n \sigma_{M_n,1}^2 + \frac83(\mu_j/2)b_n}
\right)
+4M_n\bar\alpha_1 \\
&\le
4\exp \left(
-\frac{(\mu_j/2)^2}
{64 M_n b_n^2 p_{r_j} + \frac43 \mu_j b_n}
\right)
+4M_n\bar\alpha_1.
\end{align*}
Using \(\mu_j=M_n b_n p_{r_j}\), we obtain $\frac43 \mu_j b_n
=
\frac43 M_n b_n^2 p_{r_j}$, so the denominator is bounded by
\[
\left(64+\frac43\right)M_n b_n^2 p_{r_j}
\le 66\,M_n b_n^2 p_{r_j},
\]
and then
\begin{align*}
\frac{(\mu_j/2)^2}
{64 M_n b_n^2 p_{r_j} + \frac43 \mu_j b_n}
&\ge
c\frac{\mu_j^2}{M_n b_n^2 p_{r_j}} =
c\frac{(M_n b_n p_{r_j})^2}{M_n b_n^2 p_{r_j}} =
c M_n p_{r_j}.
\end{align*}
Since \(M_n\ge n/(4b_n)\) for all large $n$,
\[
M_n p_{r_j}
\ge
\frac{n}{4b_n}\cdot 8\cdot2^{js}\frac{k}{n}
=
2\cdot2^{js}\frac{k}{b_n}.
\]
Therefore, for some universal constant \(c_0>0\),
\[
\mathbb P(R_{n,k}(x)>r_j)
\le
4\exp \left(-c_0 2^{js}\frac{k}{b_n}\right)
+
4M_n\bar\alpha_1.
\]

Finally, using \(b_n\asymp \log n\), \(\bar\alpha_1\le n^{-8}\), and \(M_n\le n/(2b_n)\), we obtain $4M_n\bar\alpha_1 \le C n^{-7}$, and then
\[
\mathbb P(R_{n,k}(x)>r_j)
\le
4\exp \left(-c_0\,2^{js}\frac{k}{\log n}\right)
+
C n^{-7},
\]
after adjusting constants. This completes the proof.
\end{proof}

\subsection{Proof of Theorem~\ref{thm:moment}}

\begin{proof}

Let $$r_* = \Big(\frac{8k}{c_- n}\Big)^{1/s}$$ and choose \(J_0\) maximal such that $2^{J_0}r_* \le r_0$. Since the support of \(\mu\) has diameter at most \(D\), we have $R_{n,k}(x)\le D$ almost surely.

We decompose the moment over
\[
\{R_{n,k}(x)\le r_*\},
\quad
\{2^j r_* < R_{n,k}(x)\le 2^{j+1}r_*\},\ 0\le j\le J_0-1,
\quad
\{R_{n,k}(x)>2^{J_0}r_*\},
\]
and we obtain
\begin{align*}
\mathbb E[R_{n,k}(x)^p]
&\le r_*^p
+ \sum_{j=0}^{J_0-1}(2^{j+1}r_*)^p\,\mathbb P(R_{n,k}(x)>2^j r_*) + D^p\,\mathbb P(R_{n,k}(x)>2^{J_0}r_*).
\end{align*}

We now apply Theorem~\ref{thm:tail}. For every \(0\le j\le J_0\),
\[
\mathbb P(R_{n,k}(x)>2^j r_*)
\le
4\exp \Big(-c_0\,2^{js}\frac{k}{\log n}\Big)+Cn^{-7},
\]
and then
\begin{align*}
\mathbb E[R_{n,k}(x)^p]
&\le r_*^p
+ C r_*^p \sum_{j=0}^{J_0-1}2^{jp}
\exp \Big(-c_0\,2^{js}\frac{k}{\log n}\Big) + C r_*^p n^{-7}\sum_{j=0}^{J_0-1}2^{jp}
+ C D^p n^{-7}.
\end{align*}

We now bound the three terms on the right-hand side. For the first term, since \(k\ge K_0\log n\), the quantity \(k/\log n\) is bounded below by a positive constant. Therefore
\[
\sum_{j=0}^{J_0-1}2^{jp}
\exp \Big(-c_0\,2^{js}\frac{k}{\log n}\Big)
\le
\sum_{j\ge 0}2^{jp}e^{-c_1 2^{js}}
<\infty
\]
for some constant \(c_1>0\). For the second term, since \(2^{J_0}r_*\le r_0<2^{J_0+1}r_*\), we have $2^{J_0}\asymp r_*^{-1}$, where the constants depend only on \(r_0\). Then
\[
r_*^p\sum_{j=0}^{J_0-1}2^{jp}
\le
C r_*^p 2^{J_0 p}
\le C,
\]
and then
\[
C r_*^p n^{-7}\sum_{j=0}^{J_0-1}2^{jp}
\le C n^{-7},
\]
which is negligible. The third term \(C D^p n^{-7}\) is negligible as well. Combining the above estimates yields
\[
\mathbb E[R_{n,k}(x)^p]
\le C r_*^p + C n^{-7}
\le C r_*^p.
\]
Since
\[
r_*^p
=
\Big(\frac{8k}{c_- n}\Big)^{p/s}
\asymp
\Big(\frac{k}{n}\Big)^{p/s},
\]
we obtain
\[
\mathbb E[R_{n,k}(x)^p]
\le
C\Big(\frac{k}{n}\Big)^{p/s}.
\]
This completes the proof.
\end{proof}

\subsection{Proof of Theorem~\ref{thm:lower}}

\begin{proof}
Let $$r^- = \Big(\frac{k}{2c_+ n}\Big)^{1/s}.$$ By Assumption~\ref{ass:upper},
\[
\mu(B(x,r^-)) \le c_+(r^-)^s = \frac{k}{2n}.
\]
Therefore,
\[
\E[N_n(x,r^-)]
=
n\,\mu(B(x,r^-))
\le k/2,
\]
where,
\[
N_n(x,r)=\sum_{i=1}^n \mathbf 1\{\|X_{n,i}-x\|\le r\}.
\]

By Markov's inequality,
\[
\P\big(N_n(x,r^-)\ge k\big)
\le
\frac{\E[N_n(x,r^-)]}{k}
\le \frac12,
\]
and
\[
R_{n,k}(x)>r^-
\iff
N_n(x,r^-)<k,
\]
we have
\[
\P\big(R_{n,k}(x)>r^-\big)\ge \frac12.
\]

On the event \(\{R_{n,k}(x)>r^-\}\), we have \(R_{n,k}(x)^p\ge (r^-)^p\), and then
\[
\E[R_{n,k}(x)^p]
\ge
(r^-)^p\,\P\big(R_{n,k}(x)>r^-\big)
\ge
\frac12\,(r^-)^p.
\]
By the definition of \(r^-\), we obtain
\[
\E[R_{n,k}(x)^p]
\ge
\frac12\Big(\frac{k}{2c_+ n}\Big)^{p/s}.
\]
The completes the proof.
\end{proof}

\subsection{Proof of Theorem~\ref{thm:exact}}

\begin{proof}
The conclusion follows directly by combining the upper and lower bounds established in Theorems~\ref{thm:moment} and \ref{thm:lower}.
\end{proof}

\section{Experiments}
We provide additional details on the implementation of the experiments.

\subsection{Experiment 1 (Section~\ref{exp1:Moment convergence rate}) implementation details}
\label{app: Experiment 1 implementation details}

\paragraph*{Setup.}
Experiment 1 uses interior query aggregation. For each MC replication, we sample 
\[
X_{\mathrm{eval}}\sim \mathrm{Unif}([0.01,0.99]^d),\qquad |X_{\mathrm{eval}}|=1000,
\]
The number of NNs is chosen as $k=\lfloor n^\beta\rceil$ with $\beta$ on a 20-point grid in $[0.1,0.9]$ for slope fitting. We also impose a cap for $\frac{k}{n}\le 0.01$. 

\paragraph*{Data generating processes.}
The i.i.d. uniform data is used as the baseline in every panel. The dependent data is generated by the following processes, with three strength levels, as shown in Table~\ref{tab:exp1-dgp-settings}.

\begin{table}
  \caption{Dependence-strength settings for the three dependent DGPs in experiment 1. Weak settings are near i.i.d., while medium and strong settings increase dependence contrast.}
  \label{tab:exp1-dgp-settings}
  \centering
  \small
  \begin{tabular}{@{}lllll@{}}
    \toprule
    Model & Tuning parameter & Weak & Medium & Strong \\
    \midrule
    Linear Gaussian state-space (factor AR(1)) & AR coefficient $\rho$ & $0.02$ & $0.60$ & $0.98$ \\
    Hidden Markov model (2-state) & Stay probability $p_{\mathrm{stay}}$ & $0.70$ & $0.99$ & $0.9995$ \\
    Gaussian process (FFT, RBF) & Lengthscale $\ell$ & $1$ & $50$ & $200$ \\
    \bottomrule
  \end{tabular}
\end{table}

\begin{itemize}
\item \textbf{Linear Gaussian state-space (LSS).}
A shared latent AR(1) factor is generated as
\[
f_t=\rho f_{t-1}+\sqrt{1-\rho^2}\,\eta_t,\qquad \eta_t\sim\mathcal N(0,1).
\]
Then each coordinate is
\[
z_{t,j}=a f_t+\sqrt{1-a^2}\,\varepsilon_{t,j},\qquad \varepsilon_{t,j}\sim\mathcal N(0,1),
\]
with dimension-damped shared loading
\[
a=\frac{s\sqrt{|\rho|}}{d^\gamma},\qquad s=0.9,\ \gamma=0.3.
\]
(Implementation clips $a\in[0,0.999]$ and uses burn-in 400.)

\item \textbf{Hidden Markov model (HMM).}
Each coordinate $j$ has its own two-state chain $S_{t,j}\in\{0,1\}$ with
\[
\Pr(S_{t,j}=S_{t-1,j})=p_{\mathrm{stay}}.
\]
Conditional emissions are
\[
Z_{t,j}\mid S_{t,j}\sim \mathcal N \big((2S_{t,j}-1)\mu_{\mathrm{eff}},\,\sigma^2\big),
\]
where $\mu=1.75$, $\sigma=0.70$, and
\[
\mu_{\mathrm{eff}}=\mu\,|2p_{\mathrm{stay}}-1|^6.
\]
(This keeps weak settings close to i.i.d.\ after marginal transformation while separating medium/strong regimes; burn-in 400.)

\item \textbf{Gaussian process (FFT, RBF).}
For each coordinate, a stationary Gaussian sequence is generated with lag covariance
\[
k(h)=\exp \left(-\frac{h^2}{2\ell^2}\right),\qquad \ell\in\{1,50,200\},
\]
using circulant embedding and FFT spectral filtering ($O(n\log n)$), then truncating to length $n$.

\item \textbf{Transformation.}
All $d$ coordinates are transformed to have approximately $\mathrm{Unif}[0,1]$ marginal distributions while retaining dependence:
\[
U_{t,j}=\Phi(Z_{t,j}) \quad \text{(LSS/GP)},
\]
and, for HMM, via the stationary two-component mixture CDF
\[
F_{\mathrm{mix}}(z)=\tfrac12\Phi \left(\frac{z+\mu_{\mathrm{eff}}}{\sigma}\right)
+\tfrac12\Phi \left(\frac{z-\mu_{\mathrm{eff}}}{\sigma}\right).
\]
\end{itemize}

\subsection{Experiment 2 (Section~\ref{exp2: Entropy estimation}) implementation details}
\label{app: Experiment 2 implementation details}

\paragraph*{Data-generating process.}
For each intrinsic dimension $s$, we generate independent stationary Gaussian AR(1) latent processes across coordinates:
\[
Z_t=\rho Z_{t-1}+\sqrt{1-\rho^2}\,\varepsilon_t,
\qquad \varepsilon_t\sim \mathcal N(0,I_s).
\]
The temporal dependence levels are $\rho=0$ for the i.i.d.\ benchmark, $\rho=0.3$ for weak AR(1) dependence, and $\rho=0.6$ for strong AR(1) dependence. The latent coordinates are independent across dimensions, while each coordinate follows the same AR(1) dependence over time. The latent vector is embedded into a $5$-dimensional signal representation and then mapped into $20$ observed coordinates by a fixed orthonormal $20\times 5$ linear map, which induces a low-rank cross-coordinate dependence structure in the ambient observations. The PCA estimator fits a $5$-component PCA representation to the observed data and then estimates entropy using the top $s$ PCA coordinates, which, in this linear Gaussian setting, recover an orthogonal low-dimensional representation up to rotation.

\paragraph*{Estimators and simulation grid.}
All reported curves use the Kozachenko--Leonenko $k$-NN entropy estimator with Euclidean distance. The oracle intrinsic estimate is computed on the true latent $s$-dimensional sample, and the PCA estimate is computed on the top $s$ PCA coordinates. The current implementation uses $1000$ Monte Carlo replications with global random seed 20260504, burn-in $2000$, and neighbor schedule $k=\max\{2,\lceil n^{0.1}\rceil\}$.

\subsection{Experiment 3 (Section~\ref{exp3: Real-data_evaluation}) implementation details}
\label{Appendix: real_data_implementation details}

We follow the real-data benchmark setup of \cite{goswami2024moment} for long-horizon forecasting, short-horizon forecasting, and classification, using the same task definitions, evaluation metrics, and preprocessing procedures. In addition to the source-to-target short-horizon setting considered in \cite{goswami2024moment}, we also report a within-group short-horizon evaluation. The setup for each task is as follows.

\begin{itemize}
    \item \textbf{Long-horizon forecasting.} We use 9 datasets from the Informer long-horizon forecasting benchmark: ETTm1, ETTm2, ETTh1, ETTh2, Electricity, Traffic, Weather, Exchange, and ILI. The evaluation metrics are MSE and mean absolute error (MAE). We use a look-back window length of $512$ and a horizontal 60/10/30 train/validation/test split, with forecast horizons $\{24,36,48,60\}$ for ILI and $\{96,192,336,720\}$ otherwise. Hyperparameters are selected by 5-fold time-series cross-validation when the training set contains at least 20 windows, and by validation-holdout tuning otherwise.

    \item \textbf{Short-horizon forecasting.} We use 6 benchmark groups from the Monash time series forecasting archive: M4-Yearly, M4-Quarterly, M4-Monthly, M3-Yearly, M3-Quarterly, and M3-Monthly. The evaluation metric is sMAPE, and the input length is $512$. In the source-to-target transfer setting, hyperparameters are selected on a source group and evaluated directly on a different target group with the same frequency. We also report a within-group short-horizon evaluation, which is not included in the original MOMENT benchmark. In this setting, each dataset group is randomly split into a 70\% source portion and a 30\% target portion; hyperparameters are selected by 5-fold cross-validation on the source portion, and evaluation is performed on the target portion from the same group.

    \item \textbf{Classification.} We use 91 datasets from the UCR classification archive. The evaluation metric is accuracy. Inputs are represented as length-$512$ time series, and we use the official train/test splits provided by the UCR archive. Hyperparameters are selected using only the official training split, via stratified cross-validation when feasible, and final performance is evaluated on the official test split.
\end{itemize}

\begin{figure}[H]
    \centering
    \includegraphics[width=1\linewidth]{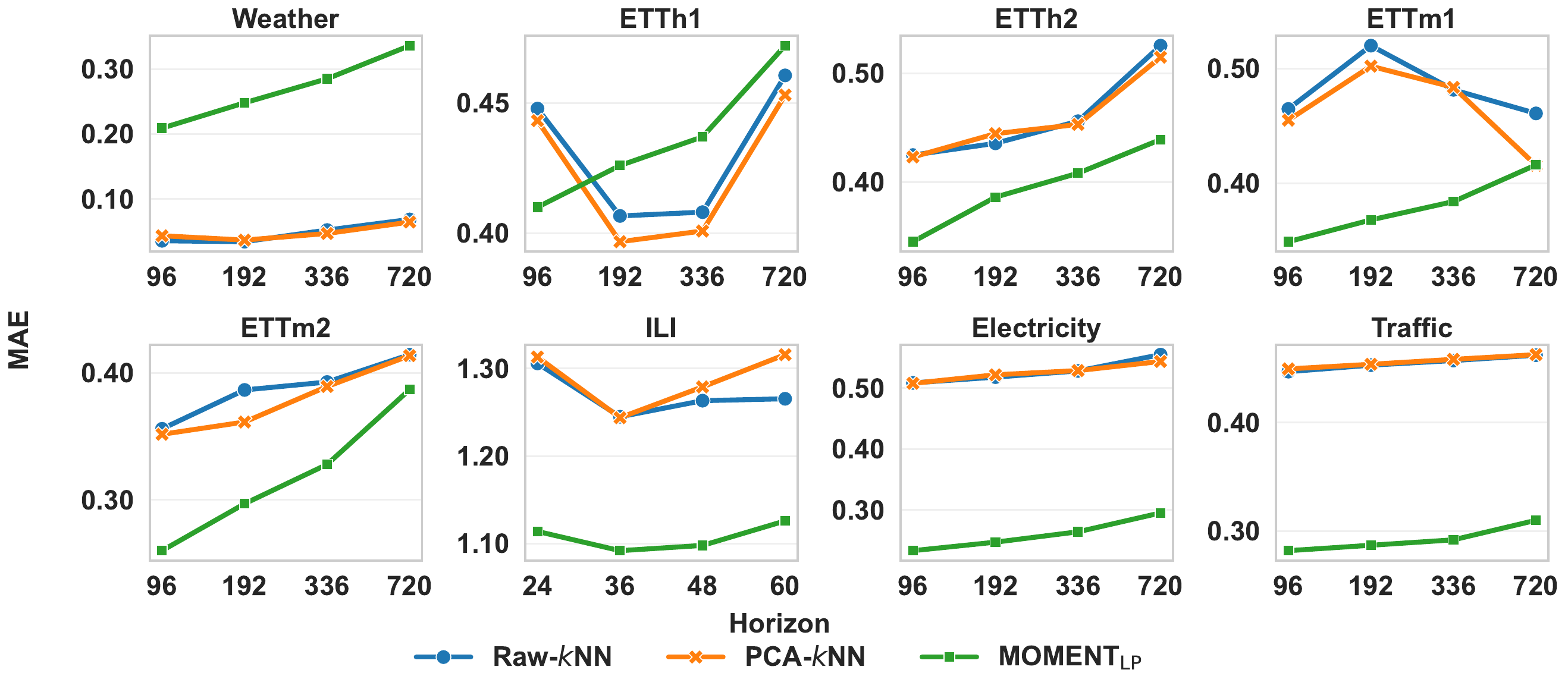}
    \caption{Comparison of long-horizon forecasting using test MAE across datasets and forecast horizons. It compares Raw-$k$NN and PCA-$k$NN with MOMENT across benchmark settings.}
    \label{fig:long_horizon_mae}
\end{figure}

\begin{table}
  \caption{Long-horizon forecasting results. We report test MSE and MAE for Raw-$k$NN, PCA-$k$NN, and MOMENT; MOMENT results are taken from Table 2 of \cite{goswami2024moment}.}
  \label{tab:long_horizon_results}
  \centering
  \footnotesize
  \begin{tabular}{@{}lccccccc@{}}
    \toprule
    \multicolumn{2}{c}{Setting} &
    \multicolumn{2}{c}{Raw-$k$NN} &
    \multicolumn{2}{c}{PCA-$k$NN} &
    \multicolumn{2}{c}{MOMENT$_{\mathrm{LP}}$} \\
    \cmidrule(r){1-2}
    \cmidrule(r){3-4}
    \cmidrule(r){5-6}
    \cmidrule(l){7-8}
    Dataset & Horizon & MSE & MAE & MSE & MAE & MSE & MAE \\
    \midrule
    \multirow{4}{*}{Weather}
      & 96  & 0.023 & 0.036 & 0.121 & 0.043 & 0.154 & 0.209 \\
      & 192 & 0.002 & 0.034 & 0.003 & 0.037 & 0.197 & 0.248 \\
      & 336 & 0.055 & 0.052 & 0.160 & 0.047 & 0.246 & 0.285 \\
      & 720 & 0.095 & 0.068 & 0.103 & 0.064 & 0.315 & 0.336 \\
    \midrule
    \multirow{4}{*}{ETTh1}
      & 96  & 0.287 & 0.448 & 0.281 & 0.443 & 0.387 & 0.410 \\
      & 192 & 0.237 & 0.407 & 0.227 & 0.397 & 0.410 & 0.426 \\
      & 336 & 0.239 & 0.408 & 0.229 & 0.401 & 0.422 & 0.437 \\
      & 720 & 0.305 & 0.461 & 0.294 & 0.453 & 0.454 & 0.472 \\
    \midrule
    \multirow{4}{*}{ETTh2}
      & 96  & 0.289 & 0.425 & 0.286 & 0.423 & 0.288 & 0.345 \\
      & 192 & 0.306 & 0.436 & 0.309 & 0.444 & 0.349 & 0.386 \\
      & 336 & 0.332 & 0.456 & 0.318 & 0.453 & 0.369 & 0.408 \\
      & 720 & 0.433 & 0.525 & 0.410 & 0.515 & 0.403 & 0.439 \\
    \midrule
    \multirow{4}{*}{ETTm1}
      & 96  & 0.314 & 0.465 & 0.297 & 0.455 & 0.293 & 0.349 \\
      & 192 & 0.382 & 0.520 & 0.356 & 0.502 & 0.326 & 0.368 \\
      & 336 & 0.325 & 0.481 & 0.328 & 0.484 & 0.352 & 0.384 \\
      & 720 & 0.304 & 0.461 & 0.247 & 0.416 & 0.405 & 0.416 \\
    \midrule
    \multirow{4}{*}{ETTm2}
      & 96  & 0.212 & 0.356 & 0.206 & 0.352 & 0.170 & 0.260 \\
      & 192 & 0.245 & 0.387 & 0.215 & 0.361 & 0.227 & 0.297 \\
      & 336 & 0.252 & 0.393 & 0.246 & 0.389 & 0.275 & 0.328 \\
      & 720 & 0.279 & 0.415 & 0.275 & 0.414 & 0.363 & 0.387 \\
    \midrule
    \multirow{4}{*}{ILI}
      & 24 & 3.038 & 1.306 & 3.065 & 1.313 & 2.728 & 1.114 \\
      & 36 & 2.723 & 1.245 & 2.728 & 1.244 & 2.669 & 1.092 \\
      & 48 & 2.915 & 1.263 & 2.848 & 1.279 & 2.728 & 1.098 \\
      & 60 & 2.872 & 1.265 & 2.931 & 1.316 & 2.883 & 1.126 \\
    \midrule
    \multirow{4}{*}{ECL}
      & 96  & 0.476 & 0.508 & 0.470 & 0.508 & 0.136 & 0.233 \\
      & 192 & 0.493 & 0.518 & 0.493 & 0.522 & 0.152 & 0.247 \\
      & 336 & 0.509 & 0.528 & 0.502 & 0.529 & 0.167 & 0.264 \\
      & 720 & 0.545 & 0.555 & 0.520 & 0.544 & 0.205 & 0.295 \\
    \midrule
    \multirow{4}{*}{Traffic}
      & 96  & 0.396 & 0.447 & 0.392 & 0.449 & 0.391 & 0.282 \\
      & 192 & 0.405 & 0.453 & 0.400 & 0.454 & 0.404 & 0.287 \\
      & 336 & 0.408 & 0.457 & 0.405 & 0.458 & 0.414 & 0.292 \\
      & 720 & 0.409 & 0.462 & 0.410 & 0.463 & 0.450 & 0.310 \\
    \bottomrule
  \end{tabular}
\end{table}


\begin{table}
  \caption{Classification accuracy across 91 UCR datasets. We report summary statistics for Raw-$k$NN, PCA-$k$NN, and MOMENT; MOMENT results are taken from Table 4 of \cite{goswami2024moment}.}
  \label{tab:classification_results}
  \centering
  \small
  \begin{tabular}{@{}lccc@{}}
    \toprule
    Statistic & Raw-$k$NN & PCA-$k$NN & MOMENT$_0$ \\
    \midrule
    Mean   & 0.748 & 0.748 & 0.794 \\
    Median & 0.753 & 0.760 & 0.815 \\
    Std.   & 0.153 & 0.151 & 0.147 \\
    \bottomrule
  \end{tabular}
\end{table}

\begin{table}
  \caption{Short-horizon forecasting results measured using sMAPE. We compare Raw-$k$NN and PCA-$k$NN transfer results with MOMENT results from Table 3 of \cite{goswami2024moment} under the same source-to-target evaluation setting.}
  \label{tab:short_horizon_results_transfer}
  \centering
  \small
  \begin{tabular}{@{}llcccc@{}}
    \toprule
    \multicolumn{2}{c}{Setting} &
    \multirow{2}{*}{Raw-$k$NN} &
    \multirow{2}{*}{PCA-$k$NN} &
    \multicolumn{2}{c}{MOMENT$_{\mathrm{LP}}$} \\
    \cmidrule(lr){1-2}
    \cmidrule(lr){5-6}
    Dataset & Frequency & & & M4 & FR \\
    \midrule
    \multirow{3}{*}{M3}
      & Yearly    & 22.14 & 22.50 & 16.74 & 16.97 \\
      & Quarterly & 17.67 & 17.24 & 10.09 & 10.62 \\
      & Monthly   & 21.05 & 21.77 & 16.04 & 16.90 \\
    \midrule
    \multirow{3}{*}{M4}
      & Yearly    & 36.48 & 34.91 & --    & 14.84 \\
      & Quarterly & 34.48 & 34.93 & --    & 12.02 \\
      & Monthly   & 31.44 & 32.86 & --    & 15.80 \\
    \bottomrule
  \end{tabular}
\end{table}

\begin{table}
  \caption{Short-horizon forecasting results measured using sMAPE under the within-group evaluation setting. We compare Raw-$k$NN and PCA-$k$NN with MOMENT results from Table 3 of \cite{goswami2024moment}.}
  \label{tab:short_horizon_results_within_group }
  \centering
  \small
  \begin{tabular}{@{}llcccc@{}}
    \toprule
    \multicolumn{2}{c}{Target series} &
    \multirow{2}{*}{Raw-$k$NN} &
    \multirow{2}{*}{PCA-$k$NN} &
    \multicolumn{2}{c}{MOMENT$_{\mathrm{LP}}$} \\
    \cmidrule(lr){1-2}
    \cmidrule(lr){5-6}
    Dataset & Frequency & & & M4 & FR \\
    \midrule
    \multirow{3}{*}{M3}
      & Yearly    & 20.56 & 21.32 & 16.74 & 16.97 \\
      & Quarterly & 14.33 & 14.38 & 10.09 & 10.62 \\
      & Monthly   & 15.78 & 16.17 & 16.04 & 16.90 \\
    \midrule
    \multirow{3}{*}{M4}
      & Yearly    & 16.65 & 16.69 & --    & 14.84 \\
      & Quarterly & 14.51 & 14.84 & --    & 12.02 \\
      & Monthly   & 15.37 & 15.53 & --    & 15.80 \\
    \bottomrule
  \end{tabular}
\end{table}



\subsection{Dataset availability}\label{Appendix: Dataset_availability}
We use two types of datasets in this paper: synthetic datasets and real-world time-series benchmark datasets. The synthetic datasets in Experiments~\ref{exp1:Moment convergence rate} and~\ref{exp2: Entropy estimation} are generated from the data-generating processes described in Appendices~\ref{app: Experiment 1 implementation details} and~\ref{app: Experiment 2 implementation details}, respectively, using the global random seed 20260504. The real-world datasets in Experiment~\ref{exp3: Real-data_evaluation} are taken from the Time Series PILE~\citep{goswami2024moment}, which includes datasets from the Informer long-horizon forecasting benchmark, the Monash time series forecasting archive, and the UCR classification archive. These benchmark datasets are publicly available from their original sources.

\subsection{Code availability}\label{Appendix: Code_availability}
Code for synthetic data generation, preprocessing, hyperparameter tuning, evaluation, and figure reproduction will be made publicly available upon publication. Experiments~\ref{exp1:Moment convergence rate} and~\ref{exp2: Entropy estimation} were run entirely on CPU on a cluster node with two AMD EPYC 9754 128-Core Processors, comprising 256 physical cores and 256 hardware threads. Reproducing these two experiments takes approximately 5 hours. Experiment~\ref{exp3: Real-data_evaluation} was run entirely on CPU on a Linux workstation with an AMD EPYC 9554 64-Core Processor, comprising 64 physical cores and 128 hardware threads, and takes approximately 4 hours to reproduce. No GPU acceleration was used.

\end{document}